\documentclass[a4paper,fleqn]{dc}
\usepackage[authoryear]{natbib}
\usepackage{url}

\usepackage{graphicx}
\usepackage{subfigure}
\usepackage{amsmath,amssymb,amsfonts}
\usepackage{algorithmic}
\usepackage{color}
\usepackage{bm}
\usepackage{algorithm}
\usepackage{ntheorem}
\usepackage{extarrows}
\usepackage{array}
\usepackage{multirow}
\usepackage{verbatim}
\usepackage{threeparttable}
\usepackage{placeins}
\usepackage{mdframed}
\usepackage{lipsum}
\usepackage{pifont}
\usepackage{bbding}
\usepackage{adjustbox}
\usepackage{makecell}
\usepackage{orcidlink}
\usepackage{siunitx}
\usepackage{tabularx}
\usepackage{float}
\usepackage{graphicx}
\usepackage{subcaption}
\usepackage{hyperref}
\newtheorem{theorem}{Theorem}

\newtheorem*{proof}{Proof}

\def\tsc#1{\csdef{#1}{\textsc{\lowercase{#1}}\xspace}}
\tsc{WGM}
\tsc{QE}
\tsc{EP}
\tsc{PMS}
\tsc{BEC}
\tsc{DE}

\usepackage{xcolor}
\definecolor{lightpink}{RGB}{255,240,245} 
\usepackage{placeins}
\usepackage{eso-pic}

\makeatletter
\let\orig@makecaption\@makecaption

\renewcommand{\fnum@table}{\sffamily\bfseries\tablename~\thetable}
\newcommand{\setcaptionwidth}[1]{%
  \long\def\@makecaption##1##2{%
    \vskip\abovecaptionskip
    \centering
    \small 
    \parbox{#1\linewidth}{%
      ##1\\
      \sffamily\mdseries ##2\par}
    \vskip\belowcaptionskip}%
}
\makeatother

\usepackage{xpatch}

\makeatletter
\xpatchcmd{\ps@pprintTitle}{20}{23}{}{}
\makeatother

\usepackage{fancyhdr}
\begin{document}
\begin{sloppypar}
\thispagestyle{empty}

\let\WriteBookmarks\relax
\def\floatpagepagefraction{1}
\def\textpagefraction{.001}

\shorttitle{Spike-driven Large Language Model}
\shortauthors{Han Xu et~al.}

\title[mode=title]{Spike-driven Large Language Model}

\author[1,2,3]{\textcolor{black}{Han Xu} }
\fnmark[1]
\author[1,4,5]{\textcolor{black}{Xuerui Qiu} }
\fnmark[1]
\author[1,6]{\textcolor{black}{Baiyu Chen} }
\author[7]{\textcolor{black}{Xinhao Luo} }
\author[1,3]{\textcolor{black}{Xingrun Xing} }
\author[1,2]{\textcolor{black}{Jiahong Zhang}}
\author[3]{\textcolor{black}{Bo Lei} }
\author[3,8]{\textcolor{black}{Tiejun Huang}  }
\author[1,2]{\textcolor{black}{Bo Xu} }
\cormark[1]
\author[1,2,9,10]{\textcolor{black}{Guoqi Li} }
\cormark[1]

\fntext[fn1]{These authors contributed equally to this work.}
\cortext[cor1]{Corresponding authors. E-mails: guoqi.li@ia.ac.cn; xubo@ia.ac.cn}

\address[1]{Institute of Automation, Chinese Academy of Sciences, Beijing 100190, China}
\address[2]{School of Artificial Intelligence, University of Chinese Academy of Sciences, Beijing 100049, China}
\address[3]{Brain-Inspired Models Group, Beijing Academy of Artificial Intelligence, Beijing 100862, China}
\address[4]{School of Future Technology, University of Chinese Academy of Sciences, Beijing 101408, China}
\address[5]{Zhongguancun Academy, Beijing 100094, China}
\address[6]{Pengcheng Laboratory, Guangdong 518055, China}
\address[7]{School of Advanced Inter-disciplinary Sciences, University of Chinese Academy of Sciences, Beijing 101408, China}
\address[8]{School of Computer Science, Peking University, Beijing 100871, China}
\address[9]{Key Laboratory of Brain Cognition and Brain-inspired Intelligence Technology, Beijing 100190, China;}
\address[10]{Spiking Intelligence Lab, Tianqiao \& Chirssy Chen Institute, Shanghai 201203, China}

\begin{abstract}
Current Large Language Models (LLMs) are primarily based on large-scale dense matrix multiplications. Inspired by the brain's information processing mechanism, we explore the fundamental question: how to effectively integrate the brain's spiking-driven characteristics into LLM inference. Spiking Neural Networks (SNNs) possess spike-driven characteristics, and some works have attempted to combine SNNs with Transformers. However, achieving spike-driven LLMs with billions of parameters, relying solely on sparse additions, remains a challenge in the SNN field.
To address the issues of limited representational capacity and sparsity in existing spike encoding schemes at the LLM level, we propose SDLLM, a spike-driven large language model that eliminates dense matrix multiplications through sparse addition operations.
Specifically, we use the plug-and-play $\gamma$-SQP two-step spike encoding method to ensure that the quantization process aligns with the model's semantic space, mitigating representation degradation caused by binary spikes.
Furthermore, we introduce bidirectional encoding under symmetric quantization and membrane potential clipping mechanisms, leading to spike trains with no or low firing counts dominating, significantly reducing the model's spike firing rate, while halving the number of time steps. Experimental results show that SDLLM not only significantly reduces inference costs 
but also achieves state-of-the-art task performance under the spike-based paradigm.
For example, compared to previous spike-based LLMs, SDLLM reduces energy consumption by 7 $\times$ and improves accuracy by 4.2\%. Our model provides inspiration for the architecture design of the next generation of event-driven neuromorphic chips.
\end{abstract}

\begin{keywords}
Spiking neural networks \sep Large language models \sep Spike-driven \sep Neuromorphic computing \sep Efficient inference
\end{keywords}

\maketitle

\renewcommand{\thefootnote}{}
\renewcommand{\thefootnote}{\fnsymbol{footnote}}

\section{Introduction}
Large language models (LLMs) have become a major breakthrough in artificial intelligence due to their strong performance on general language tasks \citep{touvron2023llama,touvron2023llama2,zhang2022opt}. However, their reliance on large-scale dense matrix multiplications leads to significant computational and energy challenges during deployment, particularly on resource-constrained devices \citep{shao2023omniquant}. In contrast, the human brain performs complex tasks efficiently with less than 20 watts of power \citep{Balasubramaniana2021BrainP}. Inspired by this, we ask a fundamental question: how can spike-driven mechanisms be incorporated into large-model inference?
Spiking neural networks (SNNs), inspired by the brain, employ sparse spike-driven communication and synaptic connections \citep{yao2024spike,maass1997networks}, executing only sparse additions (1 for firing, 0 otherwise). When combined with neuromorphic hardware, they form the neuromorphic computing paradigm, offering a promising solution for efficient modeling \citep{frenkel2023bottom,schuman2022opportunities}.

Several studies have explored combining SNNs with Transformer architectures.
(i) Although encouraging results have been reported on small-scale vision tasks such as SFA \citep{yao2025scaling,luo2024integer,liu2025efficient}, these tasks are far simpler than LLMs in both scale and complexity and therefore place substantially lower demands on spike encoding.
(ii) Models including SpikingBERT \citep{lv2023spikebert}, SpikeGPT \citep{zhu2023spikegpt}, and SpikeLM \citep{xing2023spikelm} mainly evaluate small-scale supervised tasks, whereas large-scale LLM tasks are considerably more complex and impose much higher requirements on spike encoding in terms of time steps, sparsity, and representational capacity, making direct transfer from vision or small supervised settings infeasible.
(iii) SpikeLLM \citep{xing2024spikellm} represents an early attempt to introduce spikes into LLMs; however, it relies on multi-timestep integer quantization to enhance representational capacity and does not provide a feasible spike encoding scheme aligned with neuromorphic spike-driven computation.
Consequently, achieving purely spike-driven inference for billion-scale language models using only sparse additions remains an open challenge in the SNN community.

In this work, we aim to build a spike-driven LLM that achieves LLM-level modeling capabilities through a novel spike encoding mechanism. By jointly optimizing spike representational capacity and sparsity, our model maintains competitive performance while enabling efficient inference.
To narrow the performance gap between ANNs and SNNs, we propose a two-step spike encoding method based on the $\gamma$-Semantic Quantization Principle ($\gamma$-SQP). The $\gamma$-SQP indicates that the quantization process should follow the model’s semantic space learned from training, thereby reducing the concentration of quantization cost on key semantic dimensions, thus reducing spike quantization error. Thus, guided by $\gamma$-SQP, membrane potentials are mapped to integer spike counts with lower semantic quantization error and temporally unfolded into binary spike trains, enabling high-performance spike-driven inference.

Controlling the spike firing rate is critical, as it directly affects the computational load and efficiency of neuromorphic SNNs. We therefore analyze firing-rate patterns under the two-step spiking conversion and introduce two sparsity strategies at the spike encoding level to significantly reduce the model's firing rate.
First, bidirectional spike encoding with symmetric quantization reduces the occurrence of high spike counts while halving temporal unfolding steps. Second, a ReLU-based truncation mechanism increases the proportion of zero-valued spike counts, combined with sparse rotation matrices to alleviate truncation-induced quantization error.

We evaluated SDLLM on multiple models, including LLaMA2-7B, LLaMA2-13B, LLaMA3-8B, and Qwen2.5-14B \citep{qwen2.5}, and assessed it on various tasks. These tasks include commonsense reasoning (PIQA \citep{bisk2020piqa}, ARC-easy \citep{clark2018think}, ARC-challenge \citep{clark2018think}, HellaSwag \citep{clark2018think}, and WinoGrande \citep{sakaguchi2021winogrande}), and a broader and more challenging range of tasks (reading comprehension tasks: BoolQ \citep{clark2019boolq} and SQuAD \citep{rajpurkar2016squad}; world knowledge tasks: TriviaQA \citep{touvron2023llama}; mathematical reasoning tasks: GSM8K \citep{cobbe2021gsm8k}), to demonstrate SDLLM’s broad adaptability in a multi-task environment and its effectiveness across models. Our contributions are summarized as follows:

\begin{table*}
\centering
\setcaptionwidth{1} 
\caption{Methodological spike encoding detailed methodological comparison of the differences between QuaRot, SpikeLLM, and SDLLM.}
\label{tab:method_comparison}
\setlength{\tabcolsep}{3pt}
\vspace{0.5em}
\begin{adjustbox} {width=\linewidth} 
\begin{tabular}{l|c|c|c}
Dimension & QuaRot & SpikeLLM & SDLLM \\
\midrule
Quantization scheme 
& Low-bit INT quantization 
& Multi-step integer spike encoding 
& $\gamma$-SQP two-step spike encoding \\

Spike representation 
& --
& Integer-valued spikes 
& Binary / ternary spikes \\

Matrix multiplication scheme 
& \ding{51} 
& \ding{51} 
& \ding{55} (spike-driven sparse addition) \\

Spike firing mode
& -- 
& Synchronous 
& Synchronous/Asynchronous \\

Explicit firing-rate regulation 
& -- 
& \ding{55}  
& \ding{51} (with components) \\

Neuromorphic compatibility 
& \ding{55} 
& \ding{55} 
& \ding{51} \\
\end{tabular}
\end{adjustbox}
\end{table*}

\begin{itemize}
\item We propose a sparse-addition-based spike-driven LLM that achieves LLM-level modeling capability via a novel spike encoding mechanism and eliminates dense matrix multiplications during inference.

\item We propose a plug-and-play two-step spike encoding with $\gamma$-SQP that mitigates binary-spike-induced representational degradation by aligning the quantization process with the model’s semantic space and adopting stepwise count-to-spike encoding, supporting both synchronous (multi-step latency) and asynchronous (single-step latency) spike firing modes.

\item We analyze firing patterns in count-to-spike expansion and introduce two sparsity mechanisms--bidirectional encoding under symmetric quantization and membrane potential clipping--leading to zero-valued, low-valued spike counts dominating, while halving temporal unfolding, significantly reducing the model's firing rate.

\item Experimental results demonstrate that our SNN model significantly reduces inference cost (lower FLOPs and up to 13× energy consumption reduction over the INT4 baseline), while achieving task performance at the INT4 state-of-the-art (SOTA) level (significantly enhancing the representational capacity of event-driven LLMs and yielding higher performance on multiple tasks).

\item Our SNN model provides inspiration for the architecture design of the next generation of event-driven neuromorphic chip.
\end{itemize}

\begin{figure*}
    \centering
    \begin{minipage}[t]{0.82\linewidth}
        \includegraphics[width=\linewidth]{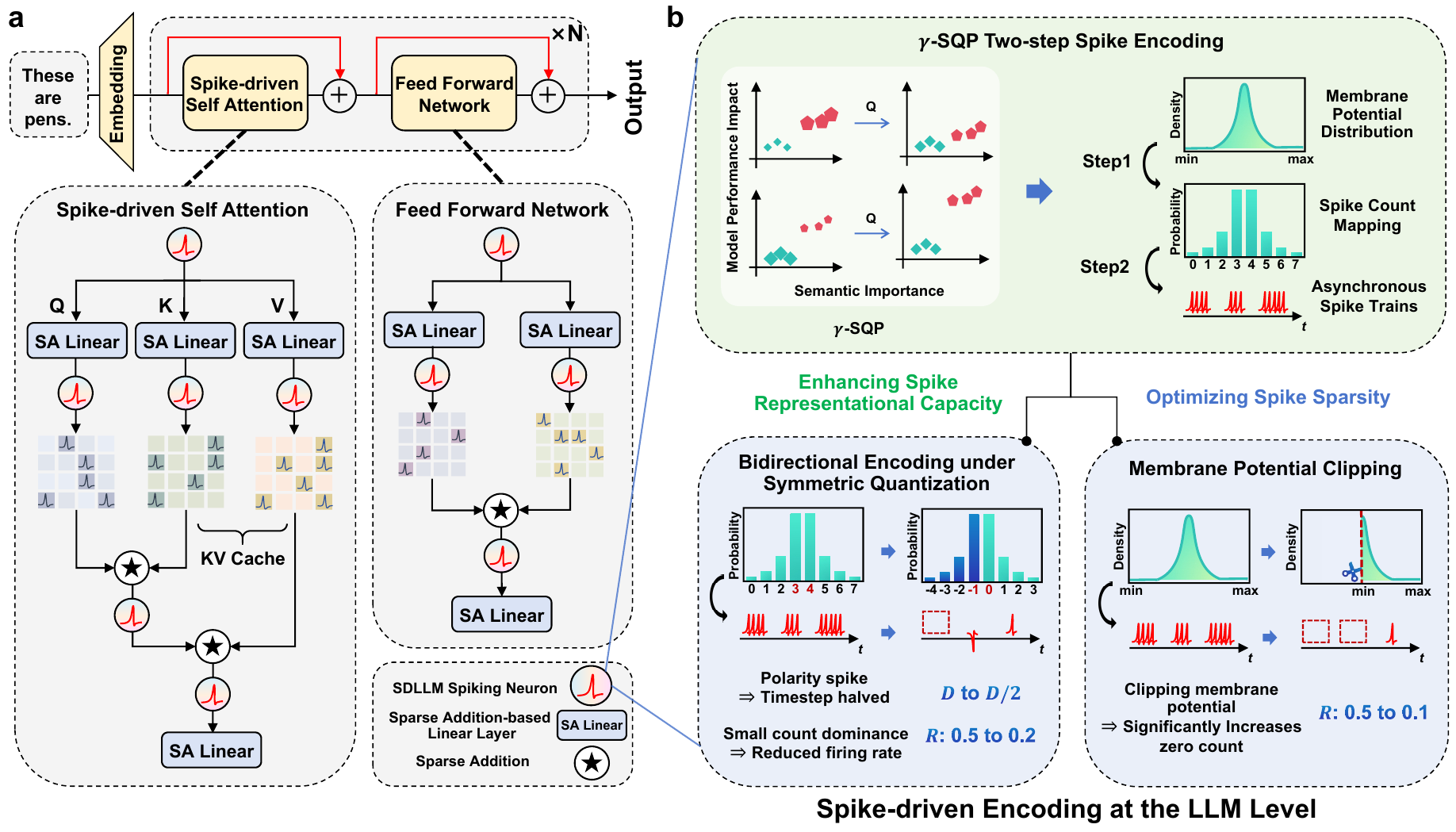}
    \end{minipage}%
    \hfill
    \begin{minipage}[t]{0.174\linewidth}
        \includegraphics[width=\linewidth]{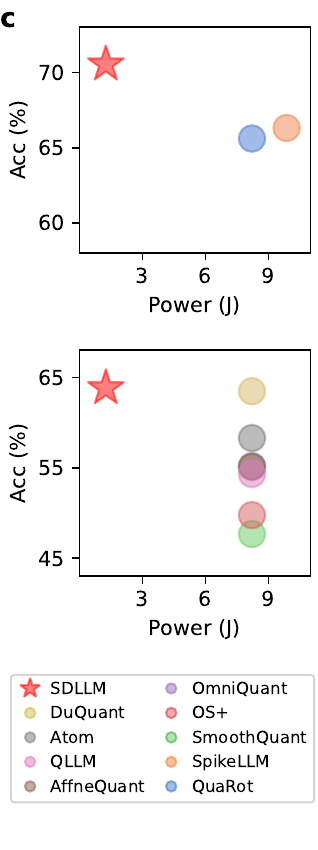}
    \end{minipage}
    \caption{(a) SDLLM replaces dense matrix multiplication with spike-driven sparse addition in Transformer-based LLMs through novel LLM-level spike encoding. (b) To address insufficient representation capacity and sparsity in existing LLM-level spike encoding, we propose $\gamma$-SQP two-step spike encoding to reduce semantic quantization loss and mitigate binary spike representation degradation, combined with bidirectional encoding and membrane potential clipping to significantly reduce firing rate (e.g., 0.5$\rightarrow$0.2, 0.5$\rightarrow$0.1) and halve temporal steps ($D$$\rightarrow$$D$/2). (c) Performance comparison of SDLLM vs. SpikeLLM (top) and low-bit edge solutions (bottom).}
    \label{fig:1}
\end{figure*}

\section{Related Work}
\paragraph{Training of Spiking Neural Networks.} The development of SNNs has long been hindered by the
challenge of training non-differentiable binary spikes. To address this, researchers have focused on improving training methods and architectural designs. Recently, two primary methods for training high-performance SNNs have emerged. One approach is to convert ANNs into spike form through neuron equivalence \citep{li2021free,hao2023reducing}, known as ANN-to-SNN conversion. However, this method requires long simulation time steps and increases energy consumption. 
Another line of methods directly trains SNNs using surrogate gradients \citep{wu2018spatio}, which typically incurs high training costs.
In contrast to previous work, we propose a plug-and-play two-step spike encoding method for inference, without the need to train the SNN. 

\paragraph{Spiking Neural Networks for Natural Language Processing.} As LLMs like GPT-3 scale, their rising computational and energy demands raise cost and sustainability concerns. To address this, SNNs are being explored in NLP for energy-efficient modeling. Bi-SNN \citep{xiao2022towards} introduced a bidirectional SNN for sentiment classification and translation. SpikingBERT \citep{lv2023spikebert,bal2024spikingbert} and SpikeLM \citep{xing2023spikelm} combined SNNs with BERT via spike-based distillation and dual-spike encoding, but remain limited to million-scale parameters and small supervised tasks. SpikeGPT \citep{zhu2023spikegpt} adopted binary spike activations and simplified attention to reduce computation, {yet still faces clear limitations in scaling and task complexity.} SpikeLLM \citep{xing2024spikellm} scaled SNNs to 7B-parameter Transformers using a “best-brain” framework, achieving competitive results; {however, its inference process relies on multi-timestep integer activations to compensate for performance, which weakens spike characteristics and makes it difficult to be compatible with neuromorphic hardware.}

\paragraph{Spiking Model Compression.} 
{Researchers have explored various compression techniques for spiking neural networks (SNNs) to reduce model cost, mainly including:}
(i) Sparsification in SNNs \citep{han2015learning,wei2025qp}, which typically adapts pruning techniques from traditional ANNs to suit both the spatial and temporal domains of spiking models~\citep{shi2023towards,shen2024efficient}. While effective on simple datasets and shallow networks, achieving strong performance on complex tasks and deeper architectures remains challenging. 
(ii) Knowledge distillation \citep{hinton2015distilling} transfers knowledge from large ANNs or SNNs into smaller SNNs to reduce model size and power consumption. However, numerous methods~\citep{takuya2021training,xu2023constructing} only distill final output logits, leading to incomplete knowledge transfer and limited effectiveness in downstream SNN performance.
(iii) Quantization~\citep{jacob2018quantization,krishnamoorthi2018quantizing}, which is particularly important for hardware deployment, reduces the bit-widths of weights and activations to enable more energy-efficient inference. Quantization methods mainly fall into two categories: \textit{post-training quantization} (PTQ) and \textit{quantization-aware training} (QAT). Recent studies on SNN quantization~\citep{qiu2025quantized} have primarily focused on small-scale vision tasks with convolutional and transformer architectures, typically adopting QAT to compress weights and constrain timesteps, relying on task-specific training protocols to achieve strong performance. However, such methods are not directly applicable to spike-based large language models.
In this study, we use a novel spike encoding method to compress SNNs. The $\gamma$-SQP two-step spike encoding significantly enhances spike representation capacity while balancing timestep length, and quantizes weights using RTN—the simplest PTQ method, without the need for training or calibration. At the same time, based on the firing patterns of the count-to-spike paradigm, we design two sparsity components—bidirectional encoding under symmetric quantization and membrane potential clipping—to regulate spike sparsity, significantly reducing the firing rate and halving the timestep length, supporting LLM-level spike encoding from both spike representation capacity and sparsity perspectives.

{\paragraph{Neuromorphic hardware.} Neuromorphic hardware is inspired by the structure and function of the biological brain, featuring compute–memory co-location and spike-driven event-based computation \citep{roy2019towards,schuman2022opportunities}. Existing platforms include both pure SNN architectures \citep{merolla2014million,davies2018loihi} and hybrid ANN/SNN designs \citep{Ma2022,Hoppner2021} to improve flexibility. Recent systems such as the asynchronous sensing–computing SoC Speck demonstrate ultra-low-power potential, with a static power of about 0.42 mW and 0.7–15 mW in typical neuromorphic vision tasks \citep{yao2024spike}. Meanwhile, Intel Loihi 2 significantly enhances programmability and representational flexibility over previous generations, supporting programmable neuron models with microcode-defined state transitions and firing logic, as well as graded spike events representing different polarities or amplitudes\citep{Intel2021}. Recent work has further deployed LLMs with 16-bit activations and 1.58-bit weights on multi-chip Loihi 2 systems using asynchronous processing, achieving up to 10× energy reduction and 4× throughput improvement over edge GPUs, highlighting the potential of neuromorphic hardware for large language models \citep{zhu2024scalable}. From an algorithm–hardware co-design perspective, our approach provides useful insights for next-generation neuromorphic chips and efficient AI systems.}

\section{Preliminary}
\paragraph{Quantization  Framework.} We employ uniform quantization for both weights and activations,
with the quantization scale determined by min–max statistics,
requiring no manual hyperparameter tuning.
For a full-precision matrix $X$, the $N$-bit quantization
process is defined as:
\begin{equation}
\label{eq:quantization}
\begin{aligned}
&\hat{{X}} =
\mathrm{Clamp}\!\left(
\left\lfloor \frac{{X}}{\Delta} \right\rceil + {Z},
\, 0,\,
2^N - 1
\right); \\
&\Delta = \frac{\max({X}) - \min({X})}{2^N - 1}, \quad
{Z} = -\left\lfloor \frac{\min({X})}{\Delta} \right\rceil.
\end{aligned}
\end{equation}
Here, $\hat{{X}}$ is the quantized
counterpart, $\Delta$ is the quantization step size, $\lfloor \cdot \rceil$ is the rounding function, and $Z$ represents the zero-point value. Moreover, \(\operatorname{Clamp}\{x, a, b\}\) confines \(x\) within range \([ a, b]\). The quantization process described above can be expressed using the quantization function $Q(\cdot)$.

\paragraph{LIF Spike Neuron.}  
The Leaky Integrate-and-Fire (LIF) neuron is a simplified biologically inspired model that simulates the electrical activity of biological neurons~\citep{roy2019towards}. It integrates incoming signals while accounting for the gradual decay (leakage) of membrane potential over time. When the membrane potential reaches a threshold, a spike is generated and the potential is reset to a baseline. Due to its balance between computational simplicity, efficiency, and biological plausibility, the LIF model is widely used in neuroscience and computational models to simulate neural information processing. The update process is defined as follows:
\begin{align}
    \mathbf{U}^\ell[t]&=\mathbf{H}^\ell[t-1]+f({\mathbf{W}^\ell},\mathbf{X}^{\ell-1}[t]), \label{eq:2}\\
    \mathbf{S}^\ell[t]&=\mathbf{\Theta}(\mathbf{U}^\ell[t]-\vartheta), \label{eq:3} \\
    \mathbf{H}^\ell[t]&= \beta\mathbf{U}^\ell[t]\cdot(1-\mathbf{S}^\ell[t])+\mathbf{V}_{reset}\cdot \mathbf{S}^\ell[t]. \label{eq:4}
\end{align}

Here, the membrane potential \( \mathbf{U}^\ell[t] \) at time step \( t \) is updated based on the previous potential \( \mathbf{H}^\ell[t-1] \) and the input signal \( f(\mathbf{W}^\ell, \mathbf{X}^{(\ell-1)}[t]) \), as shown in Eq.~\ref{eq:2}. A spike is triggered when the potential exceeds the threshold \( \vartheta \), with the step function \( \Theta \) in Eq.~\ref{eq:3} indicating the firing decision. If a spike occurs, the membrane potential is reset to \( \mathbf{V}_{\text{reset}} \), where \( \beta < 1 \) is the decay factor, as shown in Eq.~\ref{eq:4}.

\section{Methods}
In this section, we address the challenge of current spike encoding methods being inadequate for LLM-level modeling capabilities by focusing on both representational capacity and sparsity, as shown in Fig.~\ref{fig:1}. First, in Section \ref{sec:4.1}, we propose a two-step spike encoding method based on $\gamma$-SQP, which improves the representational capacity of binary spikes and reduces the performance gap between SNNs and ANNs. Next, in Section \ref{sec:4.2}, we analyze the firing rate patterns induced by the count-to-spike expansion and identify the main sources of spike redundancy. Building on this analysis, Sections \ref{sec:4.3} and \ref{sec:4.4} introduce two complementary sparsity strategies, including bidirectional encoding under symmetric quantization and membrane potential clipping, to suppress spike firing rates while preserving model accuracy. Together, these strategies enable SDLLM to replace dense matrix multiplications with spike-driven sparse additions, achieving more efficient inference in LLMs.

\subsection{Two-Step Spike Encoding with the $\gamma$-Semantic Quantization Principle
} 
\label{sec:4.1}
We aim to construct a spike-driven large language model (LLM) based on sparse addition. During inference, we replace traditional dense matrix multiplications with spike-based addition using binary spikes, thereby enabling a more efficient model. 
However, the expressive capacity of binary spike representations is insufficient, resulting in significant performance degradation, which highlights the need to improve both the fidelity and efficiency of spike-based representations.
Normalization scaling factors $\gamma$ align with the semantic representation space learned during training, reflecting the relative semantic importance of different feature dimensions. For key semantic directions, quantization perturbations are more likely to lead to a degradation in model accuracy.
To effectively reduce spike quantization errors, we propose a plug-and-play two-step spike encoding method with the $\gamma$-Semantic Quantization Principle ($\gamma$-SQP) to improve the performance of spiking neurons in LLM settings.
We first quantize continuous membrane potentials into integer-form spike counts via the $\gamma$-SQP; then, through time-domain expansion, these integer spikes are further mapped into 0/1 spike trains, enabling event-driven discrete computation.

\paragraph{Step One: Integer Spike Count Quantization with the $\gamma$-SQP.} 
Hadamard rotations can homogenize feature distributions, and QuaRot \citep{ashkboos2024quarot} leverages this property to mitigate outlier issues under low-bit quantization. The Walsh--Hadamard matrix is an efficient orthogonal transform; applying a $d \times d$ Hadamard transform $H_{\mathrm{d}}$ to $x \in \mathbb{R}^d$ incurs a computational complexity of only $O(d\log_2 d)$ ($\ll O(d^2)$).

Building on this, we observe that QuaRot exploits the computational invariance of RMSNorm in Transformer modules. Specifically, for a orthogonal matrix $\mathbf Q$, RMSNorm satisfies
$
\mathrm{RMSNorm}(\mathbf X)=\mathrm{RMSNorm}(\mathbf X\mathbf Q^\top)\mathbf Q 
$.
After fusing the RMSNorm layer with the subsequent linear layer, the computation can remain equivalent by applying corresponding orthogonal transformations to the activations and weights:
\begin{equation}
\frac{\mathbf X}{\mathrm{RMS}(\mathbf X)} \leftarrow \frac{\mathbf X\mathbf Q}{\mathrm{RMS}(\mathbf X)},\quad
\mathbf W \leftarrow \mathbf Q^\top \mathbf W\,\mathrm{Diag}(\gamma) .
\end{equation}

However, while this design accounts for the computational invariance of RMSNorm, it does not consider the semantic importance characterized by the per-dimension scaling parameter $\gamma$. Absorbing $\gamma$ into the weights during quantization disrupts its original role in modulating semantics in the representation space, leading to mismatched propagation of quantization perturbations along semantic directions.

Based on this observation, we propose the $\gamma$-Semantic Quantization Principle, which states that quantization and Hadamard mixing should follow the semantic space characterized by $\gamma$.
Low-bit quantization inevitably introduces quantization error. 
Following \citet{hassibi1992second} and \citet{xing2024spikellm}, we adopt a first-order differentiation based on the gradient of the quantization cost function, focusing on the local quantization sensitivity of the variable itself.
For a matrix $X$, we consider the error introduced by quantization on the variable itself and treat it as the quantization cost of the variable. 
Specifically, we define the quantization cost function as
$
\mathcal L(X)=\|X-Q(X)\|_2^2.
$
To analyze the effect of each dimension on the quantization cost, we consider a small perturbation applied to $X$, i.e., $X\rightarrow X+\delta X$, and define the induced cost variation as
$
\delta\mathcal L=\mathcal L(X+\delta X)-\mathcal L(X).
$
Using the first-order Taylor expansion, the above cost variation can be approximated as
$
\delta\mathcal L\approx \delta X^\top\nabla_X\mathcal L(X).
$
Based on the above first-order approximation, we define the quantization cost metric of the variable as
\begin{equation}
\boldsymbol{\kappa}(X)=X\circ X,
\end{equation}
where $\circ$ denotes the element-wise product.
Meanwhile, the per-dimension scaling parameter $\gamma$ in RMSNorm reflects the semantic importance of different representation dimensions in the network. 
When the quantization cost $\boldsymbol{\kappa}$ is concentrated on dimensions with large $\gamma$, corresponding to more semantically important directions, the resulting quantization perturbations are more likely to degrade model accuracy. 
Orthogonal mixing mitigates this issue by rotating the representation space, redistributing $\boldsymbol{\kappa}$ across dimensions and thereby reducing its concentration on semantically important directions.

\begin{theorem}
(Dispersion of Quantization Cost over the $\gamma$-Identified Semantic Subspace)
Let $\boldsymbol{\kappa}(X)$ denote the element-wise quantization cost of a variable $X$.
Let $\mathcal S_\gamma \subset [d]$ denote the semantic subspace identified by the RMSNorm scaling parameter $\gamma$, corresponding to the set of dimensions with large $\gamma$, and let $|\mathcal S_\gamma| = m$.
For a random orthogonal matrix  $\mathbf Q$, we have
\begin{equation}
\mathbb{E}_Q\!\left[\sum_{i \in \mathcal S_\gamma} \kappa_i(X\mathbf Q)\right]
= \frac{m}{d} \sum_{i=1}^d \kappa_i(X). \label{eq:gamma-dispersion}
\end{equation}
Random orthogonal mixing redistributes the quantization cost evenly across dimensions, preventing it from concentrating on the $\gamma$-identified semantic subspace.
\end{theorem}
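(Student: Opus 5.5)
We will prove the statement by taking $\mathbf Q$ to be Haar-distributed on the orthogonal group $O(d)$, and by treating $X$ as a single row vector $x\in\mathbb R^d$. The matrix case then follows by summing over rows, because $\boldsymbol{\kappa}$ acts element-wise and expectation is linear. Since $\kappa_i(x\mathbf Q)=(x\mathbf Q)_i^2$, the quantity in \eqref{eq:gamma-dispersion} equals $\sum_{i\in\mathcal S_\gamma}\mathbb E_Q[(x\mathbf Q)_i^2]$. So it suffices to show that $\mathbb E_Q[(x\mathbf Q)_i^2]=\|x\|_2^2/d$ for every coordinate $i$. Summing over the $m$ indices in $\mathcal S_\gamma$, and using $\|x\|_2^2=\sum_{j=1}^d x_j^2=\sum_{j=1}^d\kappa_j(x)$, then gives the right-hand side $\frac{m}{d}\sum_{j}\kappa_j(x)$.

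The key step is the per-coordinate identity, and there are two routes to it.

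\emph{Route 1 (direct computation).} Write $(x\mathbf Q)_i=\sum_j x_j Q_{ji}$, so that
\begin{equation*}
\mathbb E[(x\mathbf Q)_i^2]=\sum_{j,k}x_jx_k\,\mathbb E[Q_{ji}Q_{ki}].
\end{equation*}
Haar measure is invariant under multiplying $\mathbf Q$ on the left by a diagonal sign matrix, so $\mathbb E[Q_{ji}Q_{ki}]=0$ for $j\neq k$. Haar measure is also invariant under permutation matrices, so all $\mathbb E[Q_{ji}^2]$ are equal. Each column is a unit vector, so $\sum_j Q_{ji}^2=1$, which forces $\mathbb E[Q_{ji}^2]=1/d$. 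Hence $\mathbb E[(x\mathbf Q)_i^2]=\|x\|_2^2/d$.

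\emph{Route 2 (symmetry of the sum).} Alternatively, $x\mathbf Q$ is uniformly distributed on the sphere of radius $\|x\|_2$. By exchangeability of its coordinates, each $\mathbb E[(x\mathbf Q)_i^2]$ is the same, and since $\sum_i (x\mathbf Q)_i^2=\|x\|_2^2$ deterministically, each must equal $\|x\|_2^2/d$.

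Two points need care. First, note that $\mathcal S_\gamma$ is fixed and does not depend on $\mathbf Q$ (it is determined by $\gamma$ alone); this is what allows the expectation to be pulled inside the sum. Second, the only substantive issue is specifying the meaning of ``random orthogonal matrix.'' For a fixed Hadamard matrix $H_d$ the identity fails in general, since $H_d$ is deterministic. The natural fix, in the spirit of QuaRot, is a randomized Hadamard transform $\mathbf Q=\mathrm{Diag}(s)H_d/\sqrt d$ with i.i.d.\ Rademacher signs $s$. Under this choice the cross terms vanish by sign symmetry, and every entry satisfies $Q_{ji}^2=1/d$ exactly, so the same computation as in Route 1 goes through. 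The proof will therefore state the result for any distribution satisfying $\mathbb E[Q_{ji}Q_{ki}]=\delta_{jk}/d$; this covers both the Haar case and the randomized Hadamard case. The concluding interpretive sentence of the theorem then follows by comparing the result with $\sum_{i\in\mathcal S_\gamma}\kappa_i(X)$, which can be as large as the whole total $\sum_i\kappa_i(X)$ when the cost concentrates on $\mathcal S_\gamma$.
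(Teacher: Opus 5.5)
Your proof is correct. Your Route~2 is essentially the paper's own argument. The paper sets $Z=X\mathbf Q$ and appeals to rotational symmetry to say the $Z_i^2$ are identically distributed. It then uses $\|Z\|_2=\|X\|_2$ to conclude $\mathbb E_Q[Z_i^2]=\|X\|_2^2/d$ and sums over $\mathcal S_\gamma$.

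The version you say you will actually write up is genuinely different, though. Instead of using exchangeability of a vector uniform on the sphere, you expand $\mathbb E[(x\mathbf Q)_i^2]=\sum_{j,k}x_jx_k\,\mathbb E[Q_{ji}Q_{ki}]$. You then need only the column second-moment condition $\mathbb E[Q_{ji}Q_{ki}]=\delta_{jk}/d$.

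The paper's route is shorter, but it has three weaknesses:
\begin{itemize}
\item it tacitly requires full Haar invariance;
\item it leaves ``random orthogonal matrix'' unspecified;
\item it only says $\kappa_i$ ``depends on the magnitude'' of the coordinate, rather than using $\kappa_i(Z)=Z_i^2$ exactly.
\end{itemize}

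Your route buys generality and precision. It covers the randomized Hadamard transform $\mathrm{Diag}(s)H_d/\sqrt d$ of the kind used in QuaRot-style pipelines. As you note, a fixed Hadamard matrix does not satisfy the identity. Your route also handles matrix-valued $X$ by linearity over rows.

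One detail worth keeping explicit in the write-up is where the random signs go. With $\mathbf Q=H_d\,\mathrm{Diag}(s)/\sqrt d$ instead, $(x\mathbf Q)_i^2=(xH_d)_i^2/d$ is deterministic, and the identity fails in general. So $\mathrm{Diag}(s)H_d/\sqrt d$ is the correct choice.
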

\begin{proof}
Let $Z = X\mathbf Q$. Since $\mathbf Q$ is a random orthogonal matrix, $Z$ corresponds to an isotropic random rotation of the fixed vector $X$, while preserving its $\ell_2$ norm, i.e., $\|Z\|_2 = \|X\|_2$.
By rotational symmetry, the squared coordinates $Z_i^2$ are statistically identical for all $i \in [d]$, which implies
\begin{equation}
\mathbb{E}_Q[Z_i^2] = \frac{1}{d} \|X\|_2^2.
\end{equation}
Since the element-wise quantization cost $\kappa_i(\cdot)$ depends on the magnitude of the corresponding coordinate, the above result indicates that each dimension contributes equally to the total quantization cost after mixing.
Summing over the semantic subspace $\mathcal S_\gamma$ yields Equation~\eqref{eq:gamma-dispersion}.
\end{proof}

\begin{figure*}
\centering
\begin{minipage}[b]{0.49\linewidth}
  \centering
  \includegraphics[width=\linewidth]{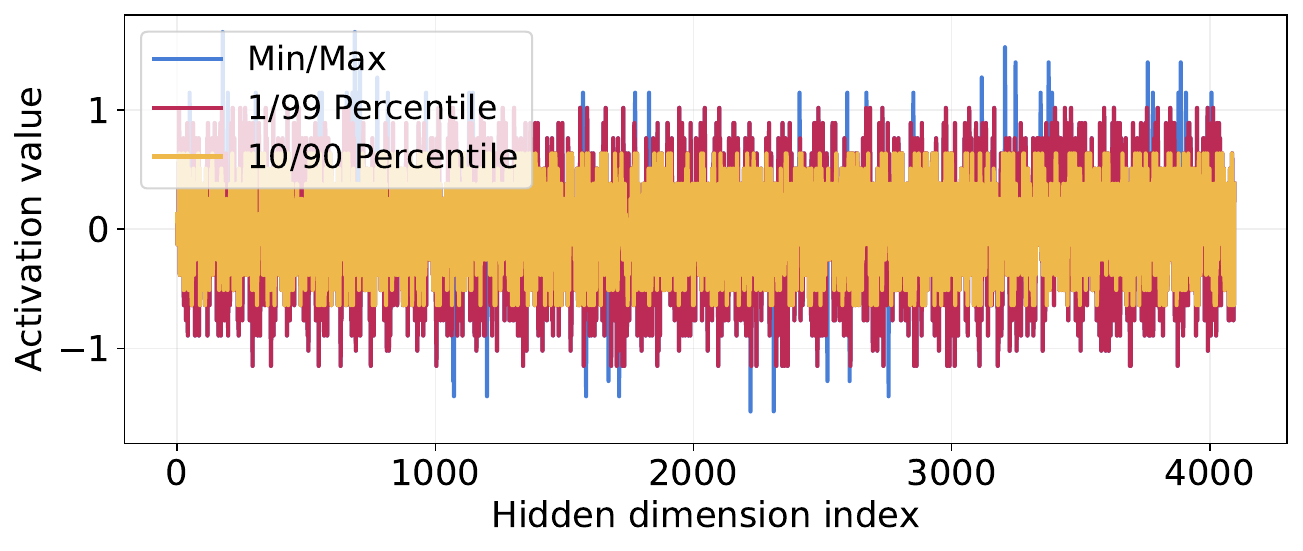}
  \scriptsize (a) QuaRot
\end{minipage}
\begin{minipage}[b]{0.49\linewidth}
  \centering
  \includegraphics[width=\linewidth]{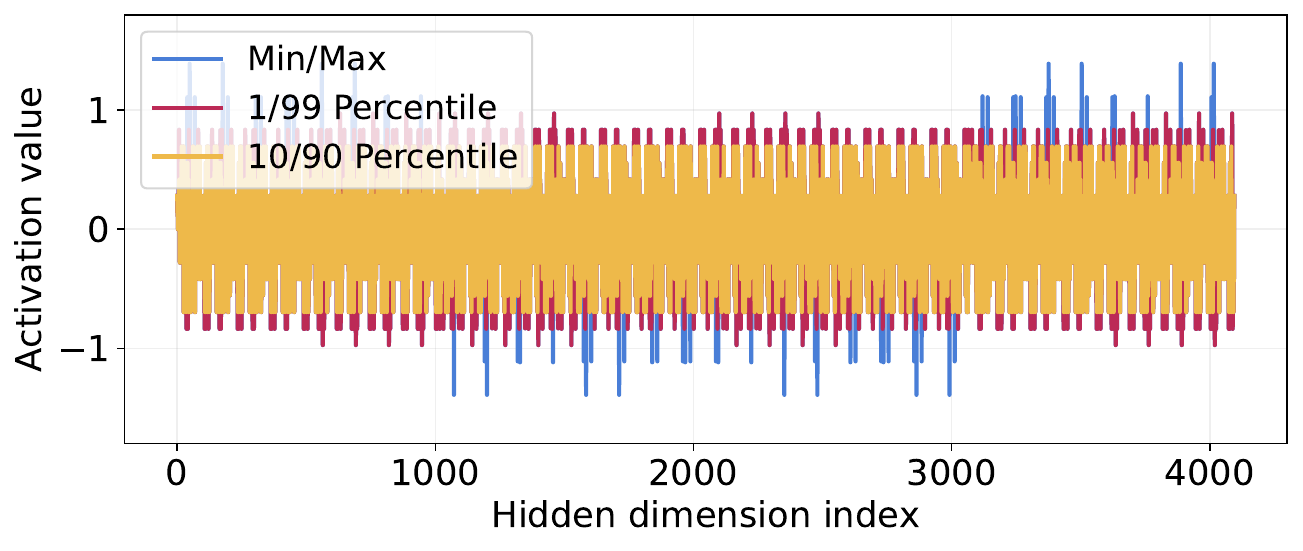}
  \scriptsize (b) SDLLM
\end{minipage}
\caption{Comparison of attention activation distributions before and after $\gamma$-semantic space alignment (QuaRot vs.\ SDLLM). Compared to QuaRot, SDLLM exhibits a more balanced and more compact attention activation distribution.
}
\label{fig:quantization_distribution}
\end{figure*}

\begin{figure*}
    \centering
    \includegraphics[width=0.62\linewidth, ]{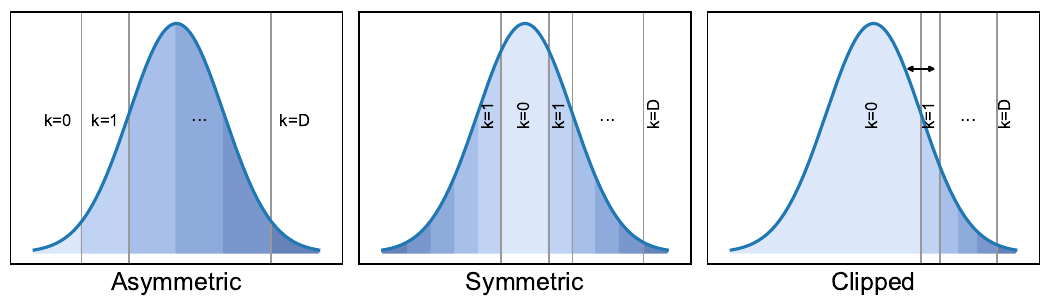}
    \caption{
    Different methods of spike quantization methods. Clipped method has adjustable 0-1 boundary, the other thresholds are uniformly split among 0 and saturation value D.
    }
    \label{fig:6}
\end{figure*}

\begin{figure}
    \centering
    \includegraphics[width=0.92\linewidth]{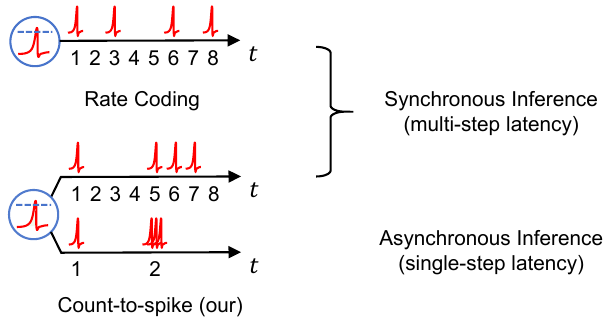}
    \caption{
    Unlike rate coding, our count-to-spike paradigm with $T \times D$ timesteps ($D$ as unfolded dimension) supports both synchronous firing (multi-step latency) and asynchronous firing (single-step latency) modes on neuromorphic chips.
    }
    \label{fig:asyn}
\end{figure}

On the one hand, since the quantization cost is positively correlated with the squared activation magnitude,
$
\kappa_i \propto x_i^2,
$
when the activation $x$ is numerically scaled by the per-dimension RMSNorm parameter $\gamma$, i.e., $\gamma \circ x$, certain dimensions exhibit larger magnitudes, leading to more pronounced quantization cost $\boldsymbol{\kappa}$ on those dimensions.
On the other hand, $\gamma$ is simultaneously a global parameter learned during training, which serves to identify the $\gamma$-semantic space formed in the representation space, i.e., the directions that are semantically more important to the model.
Based on this observation, the key lies in whether the directions emphasized by the numerical scaling of $\gamma$ are aligned with the semantic importance directions identified by $\gamma$.
Only when this alignment holds can random orthogonal mixing transfer the quantization cost $\boldsymbol{\kappa}$ highlighted by $\gamma \circ x$ from semantically important directions to less important ones, thereby reducing its concentration along semantic directions and alleviating the performance degradation under low-bit quantization.
In contrast, if the $\gamma$ learned in the representation space fails to correctly indicate the semantic structure of the weight space, such that the dimensions emphasized by $\mathbf W\mathrm{Diag}(\gamma)$ do not correspond to semantically important directions, random orthogonal mixing will instead drive the quantization cost to accumulate along truly important directions and amplify semantic perturbations.
This failure mode corresponds to the case introduced by the transformation $\mathbf Q^\top \mathbf W\mathrm{Diag}(\gamma)$. As shown in Fig.~\ref{fig:quantization_distribution}, by adhering to the correct $\gamma$-semantic space, SDLLM exhibits a more balanced and more compact distribution of attention activations compared to QuaRot, thereby alleviating low-bit quantization loss.

Accordingly, we reformulate the update of the LIF neuron in Eq.~\ref{eq:3} as
\begin{equation}
\mathbf{S}^\ell[t] = \gamma\text{-SQP}\left(\mathbf{U}^\ell[t],0,D\right). \label{eq:GMQuaRot_LIF}
\end{equation}

Here, we use $\gamma$-SQP to perform a Round-to-Nearest (RTN) quantization on the membrane potential after Hadamard rotation, i.e., $\mathbf{U} \leftarrow \mathbf{U}\,\mathrm{Diag}(\gamma)\mathbf H$ to match the correct $\gamma$ semantic space, yielding an integer-valued spike count $\mathbf{S}^\ell[t] \in \{0, \dots, D\}$ at layer $\ell$ and time step $t$.
For INT-$B$ spike count quantization, $D = 2^{B} - 1$.

\begin{figure}
    \centering
    \includegraphics[width=\linewidth]{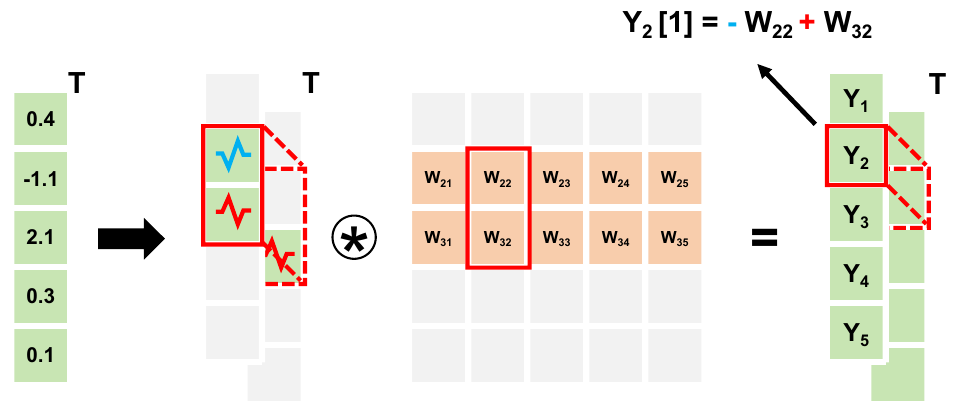}
    \caption{
    Replacing dense matrix multiplication with sparse addition via spike encoding.
    }
    \label{fig:s2}
\end{figure}

\paragraph{Step Two: From Integer Spike to 0/1 Spike.} Spike counts in integer form are converted to traditional 0/1 spike values by unfolding the virtual time step from \(T\) to \(T \times D\) \citep{luo2024integer}. Owing to the $T \times D$ virtual temporal expansion, the spike representation admits both synchronous (multi-step latency) and asynchronous continuous (single-step latency) firing modes \citep{yao2025scaling}.
Specifically, the input \( \mathbf{S}^\ell[t]\) is unfolded into a spike train \(\{\mathbf{S}^\ell[t, d] \}_{d}^{D}\), effectively converting integer values into traditional spike values, performing computations without matrix multiplication. The corresponding equations are given as follows:
\begin{align}
    \mathbf{U}^{\ell+1}[t] &= \mathbf{H}^{\ell+1}[t-1]+\sum_{d}^{D} \left(\mathbf{W}^{\ell+1} \mathbf{S}^\ell[t, d]\right).
\end{align}
Since \( \mathbf{W}^{\ell+1} \sum_{d}^{D} \mathbf{S}^\ell[t,d] = \sum_{d}^{D} \left( \mathbf{W}^{\ell+1} \mathbf{S}^\ell[t,d] \right) \), where \( \mathbf{W}^{\ell+1} \) is the corresponding weight matrix, the spike \( \mathbf{S}^\ell[t,d] \) 
can thus replace matrix multiplication with sparse addition, as formalized
in \autoref{thm:sparse_addition}.

\begin{theorem}
\label{thm:sparse_addition}
(Replacing Matrix Multiplication with Sparse Addition via Binary Spikes) Given an input spike train \( \mathbf{X} \in \{0,1\}^n \), the dense matrix multiplication \( \mathbf{Y} = \mathbf{W} \mathbf{X} \), where \( \mathbf{W} \in \mathbb{R}^{m \times n} \), is equivalent to a sparse addition over selected columns of \( \mathbf{W} \):
\begin{align}
\mathbf{Y} = \sum_{i \in \mathcal{I}} {W}_{:,i}; \quad \mathcal{I} = \{ i \mid X_i = 1 \}.
\end{align}
\end{theorem}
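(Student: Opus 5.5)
The plan is to expand the matrix--vector product column by column and then use the binary constraint on $\mathbf{X}$ to discard the zero terms. First, write $\mathbf{Y}=\mathbf{W}\mathbf{X}$ as a linear combination of the columns of $\mathbf{W}$:
\begin{align}
\mathbf{Y}=\sum_{i=1}^{n} X_i\, W_{:,i}.
\end{align}
This follows from the definition of matrix multiplication, $Y_j=\sum_{i=1}^n W_{j,i}X_i$ for each row $j\in[m]$, by stacking the $m$ coordinates.

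Next, split the index set $[n]$ into $\mathcal{I}=\{i\mid X_i=1\}$ and its complement $[n]\setminus\mathcal{I}=\{i\mid X_i=0\}$. The complement really consists of the zero entries because $\mathbf{X}\in\{0,1\}^n$. On the complement every term $X_iW_{:,i}$ is the zero vector. On $\mathcal{I}$ every term reduces to $W_{:,i}$, since $1\cdot W_{:,i}=W_{:,i}$. Hence
\begin{align}
\mathbf{Y}=\sum_{i\in\mathcal{I}}W_{:,i},
\end{align}
with the convention that the empty sum is $\mathbf{0}$. The computation therefore needs no multiplications, only $(|\mathcal{I}|-1)m$ scalar additions when $\mathcal{I}\neq\emptyset$, which is exactly the sparse-addition claim.

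The only step needing care is the link to the paper's setting. There the layer input is the integer spike count $\mathbf{S}^\ell[t]$, and the statement is applied at each virtual step to $\mathbf{S}^\ell[t,d]$. I would finish by applying the identity to each binary slice $\mathbf{S}^\ell[t,d]$ and invoking linearity, $\mathbf{W}\sum_d\mathbf{S}^\ell[t,d]=\sum_d\mathbf{W}\mathbf{S}^\ell[t,d]$. This shows that the whole integer-count product is a sum of column selections of $\mathbf{W}$. I do not expect a genuine obstacle: the argument is a direct unfolding of definitions, and the binary restriction on $\mathbf{X}$ carries all of the content.
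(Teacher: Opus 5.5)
Your proof is correct and takes essentially the same route as the paper. The paper also argues entrywise that $W_{j,i}X_i$ equals $W_{j,i}$ or $0$ depending on whether $X_i=1$ or $0$, then regroups $\mathbf{Y}=\mathbf{W}\mathbf{X}$ as the sum of the selected columns, which is your expansion $\mathbf{Y}=\sum_i X_i W_{:,i}$ restricted to $\mathcal{I}$; your linearity remark about the binary slices $\mathbf{S}^\ell[t,d]$ matches the paper's motivation stated just before the theorem.
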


\begin{proof}
Since each element of the input vector \( \mathbf{X} \) is binary (\( X_i \in \{0,1\} \)), the multiplication \( W_{j,i} \cdot X_i \) simplifies to:
\begin{align}
W_{j,i} \cdot X_i =
\begin{cases}
W_{j,i}, & \text{if } X_i = 1 \\
0, & \text{if } X_i = 0
\end{cases}
\end{align}
Therefore, the matrix-vector product \( \mathbf{Y} = \mathbf{W} \mathbf{X} \) can be rewritten as a summation over the columns of \( \mathbf{W} \) corresponding to indices \( i \) where \( X_i = 1 \). This eliminates all multiplications with 0, resulting in sparse addition:
\begin{align}
\mathbf{Y} = \sum_{i \in \mathcal{I}} {W}_{:,i}.
\end{align}
This shows that when \( \mathbf{X} \) is a 0/1 spike vector, the dense matrix multiplication degenerates into a sparse spike-driven process, where only active spikes contribute to the output.
\end{proof}

We focus on matrix multiplication operators, as the majority of computation in LLMs is concentrated in matrix multiplications, while other operators, including bias and normalization, typically contribute several orders of magnitude less computational cost. For compatibility with other operators, RMSNorm and nonlinear activations can also be approximately implemented using addition operations \citep{abreu2025neuromorphic,arora2024simple}.

\subsection{Analysis and Challenges of Spike Firing}
\label{sec:4.2}
The spike-driven mechanism makes firing rate a key factor in SNN cost.
In spike-based quantization, floating-point activations are first quantized into integer spike counts and then unfolded into 0/1 spike trains with specific firing rates, thereby introducing inherent sparsity during spike unfolding.
We further study firing-rate regularities under the count–spike two-step encoding paradigm.

\paragraph{Spike count statistics.}
We first analyze the correspondence between the integer spike count $s^\ell[t]$ obtained in the first step and the spike train $s^\ell[t,d]$ unfolded in the second step of the count–spike two-step encoding, in order to characterize the sparsity induced by expanding different spike counts into spike trains. Since the spike count is equivalent to the total number of spikes in the corresponding spike train over the $D$ unfolded time steps, we define its statistic as:
\begin{align}
k^\ell[t] &= \sum_{d}^{D} \mathbf{S}^\ell[t,d]. \label{eq:8}
\end{align}

\( k^\ell[t] \) denotes the total number of spike fired by the \(\ell\)-th layer neuron over the time window. \( \mathbf{S}^\ell[t, d] \) is the spike state at time step \( t \) and virtual step \( d \) (1 if a spike is fired, 0 otherwise). The total count \( k^\ell[t] \) is obtained by summing \( \mathbf{S}^\ell[t, d] \) over all \( d \).

\paragraph{Calculate Spike Firing Rate.} 
As illustrated in the left panel of Fig.~\ref{fig:6}, the spike count obtained from Eq.~\ref{eq:8} takes discrete values 
$k^\ell \in \{0,1,\ldots,D\}$. 
The probability of each spike count value is determined by the area of the corresponding interval under the membrane potential probability density function of the $\ell$-th layer. 
Specifically, the membrane potential probability density function is partitioned into $D$ intervals, where the area of each interval corresponds to the probability that the discrete spike count takes the value $k$ at layer $\ell$, denoted as $P_k^\ell$. 
Accordingly, the firing rate can be expressed as:
\begin{align}
R^\ell &= \sum_{k} \frac{k^\ell}{D} \cdot P_k^\ell. \label{eq:9}
\end{align}

In this formula, \( R^\ell \) denotes the firing rate of the \(\ell\)-th layer neuron, where \( k^\ell \) is the integer spike count value, \( D \) is the time window length, and \( P_k^\ell \) is the probability of quantizing to integer \( k \). The firing rate is obtained by a weighted sum over all integer spike count values and their corresponding probabilities.

We visualize the spike count distributions across different layers of LLaMA2-7B in Fig.~\ref{fig:3}.
In the first step, a 4-4 bit setting is adopted (4-bit weights and 4-bit spike counts, with $T=1$),
while the second step uses a 4-1 bit setting (4-bit weights and binary spikes with $D=15$).
Taking the QKV layer as an example, the average spike count is $7.52$, with a firing rate of approximately $0.5$,
which indicates notable spike redundancy under the current encoding configuration.

\subsection{More Sparsity via Bidirectional Encoding under Symmetric Quantization} \label{sec:4.3}

To reduce the spike firing rate, we begin by analyzing the inherent sparsity patterns in the spike quantization process and the relationship between membrane potential and spike firing probability. 

\begin{theorem}
(The Relationship Between \( R^\ell \) and \( P^\ell_k \)) 
To reduce the spike firing rate \( R^\ell \), smaller integer spike count values \( k \) should correspond to larger probabilities \( P^\ell_k \).
\end{theorem}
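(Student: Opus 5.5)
The plan is to start from the firing-rate expression in Eq.~\ref{eq:9}, $R^\ell=\sum_{k=0}^{D}\frac{k}{D}P_k^\ell$, where the $P_k^\ell$ form a probability distribution ($P_k^\ell\ge 0$, $\sum_k P_k^\ell=1$). The first step is to make the statement precise. It says that $R^\ell$ is the expectation $\mathbb{E}[k]/D$ of the spike count, viewed as a linear functional of the distribution $(P_0^\ell,\dots,P_D^\ell)$ with increasing weights $k/D$. I will prove two facts:
\begin{itemize}
\item moving probability mass from a larger count to a smaller one strictly lowers $R^\ell$;
\item among distributions with a fixed multiset of probability values, $R^\ell$ is minimized when the probabilities are arranged in non-increasing order of $k$.
\end{itemize}
Together these are exactly ``smaller $k$ should carry larger $P_k^\ell$.''

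\textbf{Mass transfer.} Fix $j<k$ and $\varepsilon\in(0,P_k^\ell]$, and set $P_k^\ell\leftarrow P_k^\ell-\varepsilon$, $P_j^\ell\leftarrow P_j^\ell+\varepsilon$. By linearity the change in firing rate is $\Delta R^\ell=\varepsilon (j-k)/D<0$. In particular $R^\ell\ge 0$, with equality iff $P_0^\ell=1$. So concentrating mass on $k=0$ and other small counts drives the rate down.

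\textbf{Rearrangement.} Given values $p_{(0)}\ge p_{(1)}\ge\dots\ge p_{(D)}$ to be assigned to the counts $0,\dots,D$, I would show that any assignment containing an inversion can be improved. An inversion is a pair $j<k$ with $P_j^\ell<P_k^\ell$. Swapping the two values changes the rate by
\[
\frac{1}{D}\bigl(jP_k^\ell+kP_j^\ell-jP_j^\ell-kP_k^\ell\bigr)=-\frac{(k-j)(P_k^\ell-P_j^\ell)}{D}<0 .
\]
Repeated swaps, i.e.\ a bubble-sort argument, terminate at the monotone non-increasing arrangement. That arrangement therefore attains the minimum, which is the rearrangement inequality applied to the sequences $k/D$ (increasing) and $P_k^\ell$.

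\textbf{Main obstacle.} The difficulty is interpretive rather than technical: the informal claim has to be pinned to one of the precise forms above, since the $P_k^\ell$ are areas under the membrane-potential density over the quantization bins and cannot be permuted arbitrarily in practice. I would close by noting what each form implies for the encoding design. Any change to the encoding that shifts mass toward low-count bins lowers $R^\ell$ via the mass-transfer computation. Examples are symmetric quantization that centers the mode of the potential density at count zero, or clipping that enlarges the zero bin. This motivates the bidirectional encoding and clipping developed next.
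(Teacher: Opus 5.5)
Your proposal is correct and rests on the same idea as the paper's proof. $R^\ell$ is a $P^\ell$-weighted sum of the increasing weights $k/D$, so pairing small $k$ with large $P_k^\ell$ lowers it.

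Where you differ is in what actually gets proved. The paper posits the ordering $P^\ell_{k_1} > \dots > P^\ell_{k_n}$ for $k_1 < \dots < k_n$ and then asserts that this allocation minimizes $R^\ell$, giving no argument for optimality. Your exchange identity $\Delta R^\ell = -(k-j)(P_k^\ell - P_j^\ell)/D$, together with the sorting step, is the rearrangement-inequality justification the paper omits.

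Your mass-transfer lemma adds a formulation the paper never states, and it is the more relevant one. As you observe, the $P_k^\ell$ are bin areas of the membrane-potential density. Symmetric quantization and membrane-potential clipping reshape that density rather than permute a fixed set of probabilities. So the monotonicity of $R^\ell$ under moving mass toward lower counts is what supports the later bidirectional-encoding and clipping mechanisms; the permutation statement alone does not.
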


\begin{proof}
From Eq. \ref{eq:9}, the spike firing rate \( R^\ell \) is given by:
\(
R^\ell = \sum_{k} \frac{k^\ell}{D} \cdot P^\ell_k.
\)
Let \( k^\ell_1 < k^\ell_2 < \dots < k^\ell_n \) denote the integer spike count values, with corresponding quantization probabilities \( P^\ell_{k_1} > P^\ell_{k_2} > \dots > P^\ell_{k_n} \). When \( k^\ell \) decreases, \( P^\ell_k \) increases. Since \( k^\ell_1 \) is the smallest spike count value, its corresponding probability \( P^\ell_{k_1} \) is the largest, and its contribution to the overall firing rate is dominant.
To minimize the firing rate \( R^\ell \), smaller \( k^\ell \) should be paired with larger \( P^\ell_k \). This allocation minimizes \( R^\ell \), producing a sparser binary spike train within a unit time window.
\end{proof}

\begin{figure*}
\centering
\begin{minipage}[b]{0.254\linewidth}
  \centering
  \includegraphics[width=\linewidth,height=0.156\textheight]{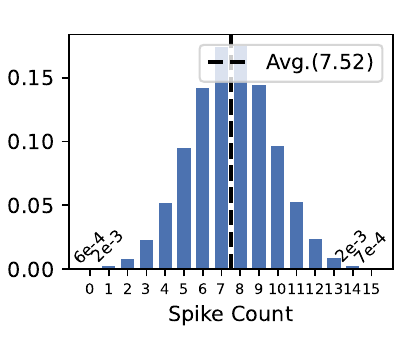}\textit{}
  \scriptsize (a) QKV (ASymmetric)
\end{minipage}
\begin{minipage}[b]{0.226\linewidth}
  \centering
  \includegraphics[width=\linewidth,height=0.156\textheight]{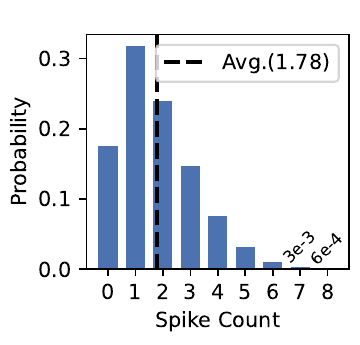}
  \scriptsize (b) QKV (Symmetric)
\end{minipage}
\begin{minipage}[b]{0.254\linewidth}
  \centering
  \includegraphics[width=\linewidth,height=0.156\textheight]{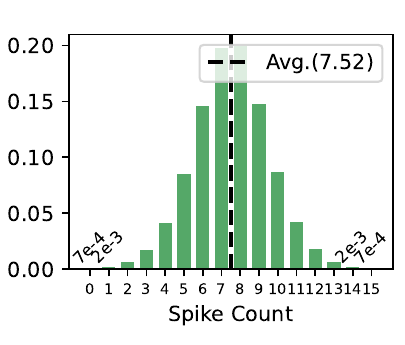}
  \scriptsize (c) OProj (ASymmetric)
\end{minipage}
\begin{minipage}[b]{0.226\linewidth}
  \centering
  \includegraphics[width=\linewidth,height=0.156\textheight]{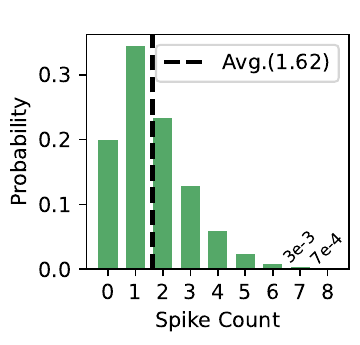}
  \scriptsize (d) OProj (Symmetric)
\end{minipage}
   \caption{Bidirectional encoding under symmetric quantization significantly reduces spike count (e.g., $7.52\rightarrow1.78$) while halving unfolded time steps (e.g., $15\rightarrow8$), dramatically lowering firing rate (e.g., $0.5\rightarrow0.2$). More results can be found in Fig.~\ref{fig:sup_d}.}
\label{fig:3}
\end{figure*}
\begin{figure*}
\centering
\begin{minipage}[t]{0.254\linewidth}
  \centering
  \includegraphics[width=\linewidth,height=0.156\textheight]{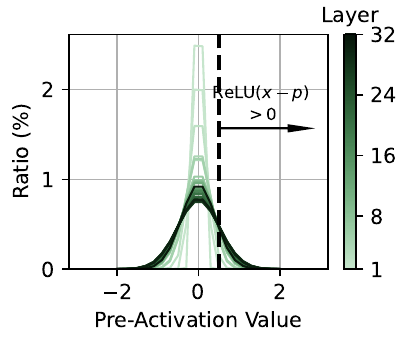}
  \scriptsize (a) QKV ($q$=0.5)
\end{minipage}\hfill
\begin{minipage}[t]{0.226\linewidth}
  \centering
  \includegraphics[width=\linewidth,height=0.156\textheight]{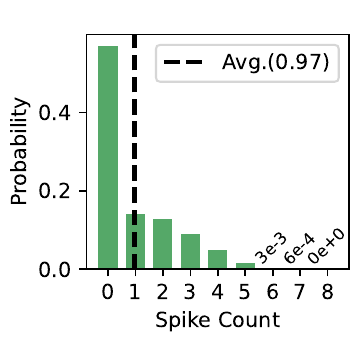}
  \scriptsize (b) QKV ($q$=0.5)
\end{minipage}
\hspace{0.15pt} 
\begin{minipage}[t]{0.254\linewidth}
  \centering
  \includegraphics[width=\linewidth,height=0.156\textheight]{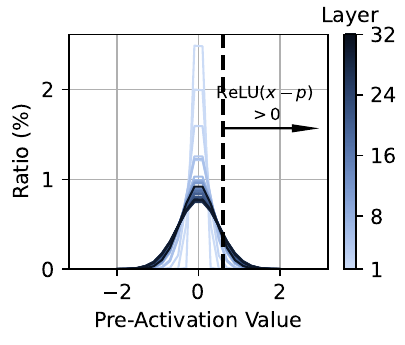}
  \scriptsize (c) QKV ($q$=0.6)
\end{minipage}\hfill
\begin{minipage}[t]{0.226\linewidth}
  \centering
  \includegraphics[width=\linewidth,height=0.156\textheight]{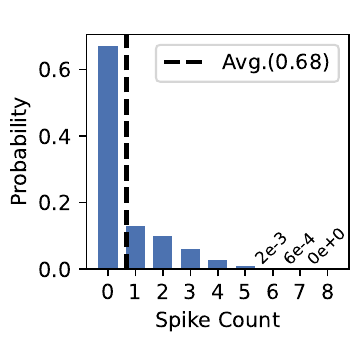}
  \scriptsize (d) QKV ($q$=0.6)
\end{minipage}
    \caption{Spike count is further reduced by membrane potential clipping via quantile-based ReLU (e.g., $7.52\rightarrow0.97$), achieving lower firing rate (e.g., $0.5\rightarrow0.1$).}
    \label{fig:4}
\end{figure*}

As shown in Fig.~\ref{fig:6} (left), asymmetric quantization tends to concentrate the probability of spike count values k in the middle range, with relatively low probability at the extremes. To alleviate this issue, we introduce a bidirectional spike encoding strategy based on symmetric quantization within the $\gamma$-SQP two-step spiking framework. This strategy improves firing sparsity in spike-based large language models while halving the temporal expansion. Specifically, membrane potentials are first symmetrically quantized by $\gamma$-SQP into polarity integer counts, which are then expanded into ternary spike trains (-1/0/1) through bidirectional temporal encoding.

Ternary spikes are a generalized extension of binary spikes, adding polarity information while preserving spike-driven computation \citep{xing2023spikelm,guo2024ternary}. Our focus 
is not on the spike representation itself, as binary or ternary spikes themselves do not determine the sparsity of the spike trains. Instead, our approach focuses on achieving sparser spike trains through our $\gamma$-SQP two-step spike encoding method, combined with symmetric quantization and bidirectional encoding. Ternary spikes, due to their polarity, are well-suited as the output representation for our method.

\paragraph{Step One.} 
To mitigate the high firing rate caused by asymmetrically quantized spikes, we adopt bidirectional spike encoding under symmetric quantization. Specifically, polarity spike counts are first obtained via our symmetric quantization with $\gamma$-SQP. Accordingly, Eq.~\ref{eq:GMQuaRot_LIF} is rewritten as:
\begin{align}
    \mathbf{S}^\ell[t]&=\gamma\text{-SQP}\left(\mathbf{U}^\ell[t]),-\frac{D}{2}-1, \frac{D}{2}\right).
\end{align}

Similarly, we adopt the $\gamma$-SQP method introduced in Section~\ref{sec:4.1} to incorporate a Hadamard rotation, but apply symmetric quantization to the rotated membrane potential, i.e., $\mathbf{U} \leftarrow \mathbf{U}\,\mathrm{Diag}(\gamma)\mathbf H$. Specifically, the rotated membrane potential is quantized to integer values and constrained to the interval $\left[-\frac{D}{2}-1,\; \frac{D}{2}\right]$, yielding the integer-valued spike count $\mathbf{S}^\ell[t]$.

\paragraph{Step Two.}
To generate bidirectionally encoded spikes, we map integer spike counts by unfolding the virtual time steps from $T$ to
$
T \times \max\!\left(\left|-\frac{D}{2}-1\right|,\; \left|\frac{D}{2}\right|\right).
$
The integer spike count $\mathbf{S}^\ell[t]$ is unfolded into a spike train
$\{\mathbf{S}^\ell[t,d]\}_{d=1}^{\frac{D}{2}+1}$, distributing the original integer spike count into a temporally expanded bidirectional spike train. Owing to the increased representational capacity introduced by bipolar spikes, the required temporal length is reduced from $D$ to $\frac{D}{2}+1$. The corresponding computation is given by
\begin{align}
\mathbf{U}^{\ell+1}[t]
=
\mathbf{H}^{\ell+1}[t-1]
+
\sum_{d=1}^{\frac{D}{2}+1}
\bigl(
\mathbf{W}^{\ell+1} \mathbf{S}^\ell[t,d]
\bigr).
\end{align}

On one hand, because bidirectional spikes carry polarity, they can represent twice the amount of information within a single time step compared to non-polarized spikes, thus halving the required number of time steps. On the other hand, due to the two-step symmetric quantization encoding method, the corresponding counting scheme for specific membrane potential mapping regions has changed.  The original spike train with a count of 7, \( \{1, 1, 1, 1, 1, 1, 1, 0, \dots, 0\}_{15} \), is transformed into a spike train with a count of -1, \( \{-1, 0, \dots, 0\}_8 \), after bidirectional encoding under symmetric quantization. Bidirectional spike encoding simultaneously preserves the spike-driven nature and improves efficiency, as illustrated in Fig. \ref{fig:s2}. 

\begin{theorem}
(Replacing Matrix Multiplication with Sparse 
Addition via Ternary Spikes) Given an input vector \( \mathbf{X} \in \{-1, 0, 1\}^n \), the matrix multiplication \( \mathbf{Y} = \mathbf{W} \mathbf{X} \) can be equivalently computed as a sparse accumulation over selected columns of \( \mathbf{W} \):
\begin{align}
&\mathbf{Y} = \sum_{i \in \mathcal{I}_+} {W}_{:,i} + \sum_{i \in \mathcal{I}_-} (-{W}_{:,i}); \\
&\mathcal{I}_+ = \{i \mid X_i = 1\},\ \mathcal{I}_- = \{i \mid X_i = -1\}.
\end{align}
\end{theorem}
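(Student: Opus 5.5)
The plan is to mirror the argument used for \autoref{thm:sparse_addition}, adding a case for the negative polarity. Because $X_i$ now takes values in $\{-1,0,1\}$, the index set $[n]$ splits into three disjoint parts: $\mathcal{I}_+$, $\mathcal{I}_-$, and the zero set $\mathcal{I}_0=\{i \mid X_i=0\}$. Their union is all of $[n]$.

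The first step is to write the product column-wise,
\begin{align}
\mathbf{Y} = \mathbf{W}\mathbf{X} = \sum_{i=1}^{n} X_i\, W_{:,i},
\end{align}
which is just the definition of matrix--vector multiplication read entry by entry: $Y_j=\sum_i W_{j,i}X_i$.

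The second step splits this sum according to the partition into three subsums. Each subsum is then simplified using the value of $X_i$ on that part.
\begin{itemize}
\item On $\mathcal{I}_0$, every term $X_iW_{:,i}$ is the zero vector, so this subsum vanishes.
\item On $\mathcal{I}_+$, we have $X_iW_{:,i}=W_{:,i}$, exactly as in the binary case of \autoref{thm:sparse_addition}.
\item On $\mathcal{I}_-$, we have $X_iW_{:,i}=-W_{:,i}$. This is a sign flip, i.e.\ a subtraction in the accumulator, not a genuine multiplication.
\end{itemize}
Adding the surviving subsums gives the claimed identity.

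As an alternative route, one can write $\mathbf{X}=\mathbf{X}^+-\mathbf{X}^-$, where $\mathbf{X}^\pm\in\{0,1\}^n$ are the indicator vectors of $\mathcal{I}_\pm$. Linearity then gives $\mathbf{W}\mathbf{X}=\mathbf{W}\mathbf{X}^+-\mathbf{W}\mathbf{X}^-$. Applying \autoref{thm:sparse_addition} to each binary term yields the result directly.

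There is no real obstacle here. The only point needing care is to state explicitly that the three index sets are disjoint and exhaustive, so that the split sum equals the original sum. It is also worth remarking that the computation uses only additions and subtractions, with cost proportional to $|\mathcal{I}_+|+|\mathcal{I}_-|$. This is the sense in which the operation is spike-driven.
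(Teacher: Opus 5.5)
Your proof is correct and takes essentially the same route as the paper: both expand $\mathbf{Y}=\mathbf{W}\mathbf{X}$ column by column and do a case analysis on $X_i\in\{1,-1,0\}$, so each column contributes $W_{:,i}$, $-W_{:,i}$, or nothing. Your explicit partition of $[n]$ into disjoint, exhaustive index sets makes the paper's summation step slightly more rigorous, and the optional reduction via $\mathbf{X}=\mathbf{X}^+-\mathbf{X}^-$ to the binary theorem is a valid alternative.
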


\begin{proof}
Each nonzero element in \( \mathbf{X} \) represents an event-triggered spike at index \( i \), and contributes to the output according to:
\begin{align}
\tilde{{W}}_{:,i} =
\begin{cases}
\begin{array}{@{}rl}
{W}_{:,i}, & \text{if } X_i = 1 \\
- {W}_{:,i}, & \text{if } X_i = -1 \\
0, & \text{if } X_i = 0
\end{array}
\end{cases}
\end{align}

Thus, instead of computing \( \mathbf{W} \mathbf{X} \) through dense multiply-accumulate, we perform sparse selection and signed accumulation over active spike positions:
\begin{align}
\mathbf{Y} = \sum_{i \in \mathcal{I}_+} {W}_{:,i} + \sum_{i \in \mathcal{I}_-} (-{W}_{:,i}).
\end{align}

This sparse formulation eliminates multiplications and reflects the spike-driven nature of Ternary spikes, where each spike determines whether and how the corresponding column contributes to the final output.
\end{proof}

\paragraph{Spike firing count.}
Under the bidirectional spike encoding scheme, neuronal activations are represented by spike trains with different polarities. Although spikes may have different polarities, all nonzero spike events are considered functionally equivalent activations. Accordingly, the total spike activity at layer $\ell$ is defined as
\begin{align}
k^\ell = \sum_{t=1}^{T} \sum_{d=1}^{\frac{D}{2}+1} \left| \mathbf S^\ell[t,d] \right|.
\end{align}
Here, $k^\ell$ denotes the aggregate number of spikes counted irrespective of polarity, providing a unified measure of overall activation strength along the temporal dimension under bidirectional encoding.

As shown in the middle panel of Fig.~\ref{fig:6}, in the symmetric spike quantization mode, the mapping range of the membrane potential narrows as the number of spikes within the unit time window increases, {causing the mapping probability to decrease,} resulting in a significantly lower overall firing rate compared to the asymmetric mode. In Fig.~\ref{fig:3}, we present the results on the QKV layer of LLaMA, where the average spike count is reduced from 7.52 to 1.78 and the firing rate decreases from 0.5 to 0.22 under symmetric quantization, and the unfolded time steps are simultaneously reduced from 15 to 8.

\subsection{More Sparsity Achieved via Membrane Potential Clipping} \label{sec:4.4}
In addition to symmetric spike quantization, we further explore enhancing sparsity by modifying the initialization of the membrane potential distribution through clipping. As shown in Fig.~\ref{fig:6} (right), the majority of the membrane potential distribution is mapped to the 0-valued spike count, while only a small clipped portion of the distribution is progressively mapped to spike counts from 1 to the maximum value. This design significantly increases the proportion of the probability mapping area corresponding to spike trains expanded from 0-valued spike counts, thereby further reducing the overall spike firing rate (see Fig.~\ref{fig:4}).

\paragraph{Quantile-Smoothed ReLU.}
The ReLU (Rectified Linear Unit) activation has been widely shown to promote sparsity in conventional models. 
Inspired by this property, with reference to the method proposed in ReLU Strikes Back \citep{mirzadehrelu}, we propose a variant more suitable for our setting, termed Quantile-Smoothed ReLU.
This method is applied as a lightweight fine-tuning step on the baseline prior to the two-step encoding with $\gamma$-SQP, where a quantile-based threshold is introduced to partition the membrane potential and enhance activation sparsity.
Specifically, during fine-tuning, we maintain and record a moving average of the quantile of the membrane potential distribution; during inference, the recorded quantile threshold is directly reused without recomputing the quantile, thereby avoiding additional computational overhead. The Quantile-Smoothed ReLU is defined as:

\begin{align}
\mathbf U_{\mathrm{sp}}^\ell(t)
&= \mathrm{ReLU}\!\left(\mathbf U^\ell(t) - \tau^\ell(t)\right); \label{eq:relu_memb}\\
\tau^\ell(t)
&= \mathrm{Quantile}\!\left(\mathbf U^\ell(t), q\right).
\end{align}

Here, $\mathbf U^\ell(t)$ denotes the membrane potential of the neuron at layer $\ell$ and time step $t$, and $q$ is the quantile ratio factor. 
$\tau^\ell(t)$ is used to determine the membrane-potential threshold corresponding to the $q$-th quantile, thereby partitioning the membrane potential. 
This quantile is updated via a moving average:
\begin{align}
\begin{aligned}
\tau^\ell(t)
&= \alpha\,\tau^\ell(t-1) + (1-\alpha)\,{\tau}^\ell(t),
\end{aligned}
\end{align}
where $\alpha$ is a moving-average factor. 
It is worth noting that the computation and update of the quantile are performed only during the fine-tuning stage.
The resulting output $\mathbf U_{\mathrm{sp}}^\ell(t)$ represents the membrane potential after being partitioned by the Quantile-Smoothed ReLU. 
Subsequently, we rewrite Eq.\ref{eq:GMQuaRot_LIF} as:
 \begin{align}
 \mathbf S^\ell[t] &= \gamma\text{-SQP}\left(\mathbf{U}^\ell_{\text{sp}}(t)),0,D\right).
\end{align}

\paragraph{Joint Sparsity and Rotation Matrices.}
We further explore the joint effect of rotation matrices and sparsification. By learning sparsity from computational invariance, we construct
$
\mathrm{ReLU}(\mathbf{XQ})\,\mathbf{Q}^{\mathsf{T}}\mathbf{W},
$
to simultaneously improve spike quantization performance and sparsity . Based on this construction, Eq.~\ref{eq:relu_memb} can be further rewritten as:
\begin{align}
\mathbf{U}_{\mathrm{sp}}^\ell(t)
=
\mathrm{ReLU}\!\left(
\mathbf{U}^\ell(t)\mathbf{Q}
-
\tau^\ell(t)
\right).
\end{align}
For the next layer, we have
$
f\!\left(\mathbf{W}^{\ell+1}\mathbf{Q},\, \mathbf{S}^\ell[t]\right).
$
The membrane potential is first transformed by the rotation matrix and then processed by the quantile-smoothed ReLU to obtain sparse membrane potentials; subsequently, these spike signals are combined with the rotated weights via sparse addition.

\begin{table*}
\centering
\setcaptionwidth{1} 
\caption{Zero-shot QA (↑) results between SDLLM and SpikeLLM under SpikeLLM settings. 
'Bit' denotes the bit-width of the weights and activations, respectively. '$R$' denotes the firing rate. 
The $T \times D$ time-step setting with an unfolded dimension $D$ supports synchronous (multi-step latency) and asynchronous (single-step latency) spike firing.
 FLOPs (↓) are INT FLOPs, the same as \citet{xu2023qdetr,liu2020birealnet,xu2025neurormorphic}. Power (↓) is the estimated energy consumption, the same as \citet{yao2025scaling,qiu2025quantized,yao2024spikev2}, { further details are provided in Appendix \ref{app:details_of_op_ec}.}}
\label{tab:1}
\setlength{\tabcolsep}{3pt}
\vspace{0.5em}
\begin{adjustbox} {width=\linewidth} 
\begin{tabular}{l c c c c c c c c c c c c@{\hspace{3pt}}c}
\toprule
Model & Spike & Bit & $T \times D$ & $R$ & PIQA & ARC-e & ARC-c & BoolQ & HellaS & WinoG & Avg. & FLOPs(T)  & Power(J)\\
\midrule
LLaMA2-7B    & \ding{55} & -   & - & -          & 78.84 & 74.54 & 46.33 & 77.74 & 75.97 & 69.22 & 70.44 & 6.91 & 31.77 
\\
QuaRot             & \ding{55} & 4-4  & - & - & 71.82 & 59.89 & 36.18 & 67.37 & 63.88 & 59.12 & 59.71 & 0.86 & 3.97  
\\
SpikeLLM           & \ding{51} & 4-4  & $1.2 \times 1$ & - & 72.47 & 62.29 & 36.01 & 69.48 & 64.74 & 59.43 & 60.74 & 1.04 & 4.77 
\\
\rowcolor{lightpink}
SDLLM              & \ding{51} & 4-1.58  & $1 \times 8$ & 0.216 & 75.84 & 69.65 & 41.21 & 74.01 & 71.75 & 66.14 & \textbf{66.43} & \textbf{0.75} &\textbf{0.67}
\\ 	 
\midrule
LLaMA2-13B   &\ding{55}  & -   & -  & - & 80.63 & 77.48 & 49.23 & 80.73 & 79.37 & 71.74 & 80.69 & 13.42 & 61.74  
\\
QuaRot             &\ding{55}  & 4-4  & - & - & 74.86 & 69.19 & 41.98 & 72.54 & 70.35 & 64.72 & 65.61 & 1.68 & 7.72 
\\
SpikeLLM           & \ding{51} & 4-4  & $1.2 \times 1$ & - & 75.79 & 69.53 & 41.21 & 74.31 & 71.51 & 65.51 & 66.31 & 2.01 & 9.26 
\\
\rowcolor{lightpink}
SDLLM              & \ding{51} & 4-1.58  & $1 \times 8$ & 0.209 & 78.51 & 74.12 &46.16 & 78.26 & 76.36 & 69.85 & \textbf{70.54} & \textbf{1.40} &\textbf{1.26}\\
\bottomrule
\end{tabular}
\end{adjustbox}

\vspace{4pt}

\centering
\caption{Zero-shot QA (↑) with Membrane Potential Clipping: Lower Firing Rate Enhances Efficiency.}
\label{tab:2}
\setlength{\tabcolsep}{3pt}
\vspace{0.5em}
\begin{adjustbox}{width=\linewidth} 
\begin{tabular}{lcccccccccccccc}
\toprule
\multirow{2}{*}{Model} &\multirow{2}{*}{Bit} & \multicolumn{5}{c}{QKV} & \multirow{2}{*}{PIQA}& \multirow{2}{*}{ARC-E} &\multirow{2}{*}{ARC-C} &\multirow{2}{*}{BoolQ} &\multirow{2}{*}{HellaS} &\multirow{2}{*}{WinoG}&\multirow{2}{*}{Avg.}\\
\cmidrule(lr){3-7} 
& & $\textit{q}$ & $T \times D$ & $R$ & FLOPs(T) & Power(J)  \\
\midrule
LLaMA2-7B & - & -   & -                      & -              & 1.649 & 7.585 & 78.84 & 74.54 & 46.33 & 77.74 & 75.97 & 69.22 & 70.44
\\
QuaRot & 4-4     & -   & -                      & -  & 0.206 & 0.948 & 71.82 & 59.89 & 36.18 & 67.37 & 63.88 & 59.12 & 59.71 
\\
SpikeLLM & 4-4   & -   & 1.2 $\times$ 1                    & -  & 0.247 & 1.136 & 72.47 & 62.29 & 36.01 & 69.48 & 64.74 & 59.43 & 60.74
\\
\rowcolor{lightpink}
SDLLM & 4-1.58    & -   & 1 $\times$ 8 & 0.216 & \textbf{0.184} & \textbf{0.166} & 75.84 & 69.65 & 41.21 & 74.01 & 71.75 & 66.14 & \textbf{66.43}
\\
\rowcolor{lightpink}
SDLLM$_q$ & 4-1.58    & 0.5 &  1 $\times$ 8  & 0.120  & \textbf{0.100} & \textbf{0.090} & 73.94 & 59.22 & 34.30 & 71.71 & 64.30 & 63.61 & \textbf{61.18} 
 \\
\rowcolor{lightpink}
SDLLM$_q$ & 4-1.58     & 0.6 & 1 $\times$ 8  & 0.100  & \textbf{0.083} & \textbf{0.075} & 73.07 & 61.15 & 34.13 & 69.60 & 63.57 & 60.85 & 60.40 
\\
\bottomrule
\end{tabular}
\end{adjustbox}
\end{table*}

\begin{table}
\centering
\setcaptionwidth{1} 
\caption{Comparison of PPL (↓) metrics on Wikitext2 and C4 for LLaMA2-7B and 13B between SDLLM and QuaRot.}
\label{tab:app_llama_ppl}
\large \fontseries{sb}\selectfont
\setlength{\tabcolsep}{3pt}
\vspace{0.5em}
\begin{adjustbox}{width=1\linewidth} 
\begin{tabular}{@{}lcccccc@{\hspace{2pt}}c@{}}
\toprule

Model  & Bit &  $T \times D$ & $R$ & Wiki & C4  & FLOPs(T) & Power(J) \\
\midrule
LLaMA2-7B  & - & - & - & 5.47 & 7.26  & 6.91 & 31.77  \\
SmoothQuant    & 4-4 & - & - & 83.12 & 77.27 & 0.86 & 3.97  \\
OmniQuant      & 4-4 & - & - & 14.26 & 18.02 & 0.86 & 3.97  \\
AfineQuant     & 4-4 & - & - & 12.69 & 15.76 & 0.86 & 3.97  \\
QLLM            & 4-4 & - & - & 11.45 & 13.26  & 0.86 & 3.97  \\
Atom          & 4-4 & - & - & 8.40  & 10.96  & 0.86 & 3.97  \\
QuaRot      & 4-4 & - & - & 8.73  & 12.27 & 0.86 & 3.97  \\
\rowcolor{lightpink}
SDLLM     & 4-1.58 & 1 $\times$ 8 & 0.216 & \textbf{6.40} & \textbf{8.58}  & \textbf{0.75} & \textbf{0.67} \\
\rowcolor{lightpink}
SDLLM         & 4-1.58 & 1 $\times$ 16 & 0.221 & \textbf{5.95} & \textbf{7.93} & 1.53 & \textbf{1.37} \\
\midrule
LLaMA2-13B  & - & - & - & 4.88 & 6.73  & 13.42 & 61.74  \\
SmoothQuant     & 4-4 & - & - & 35.88 & 43.19  & 1.68 & 7.72  \\
OmniQuant      & 4-4 & - & - & 12.30 & 14.55  & 1.68 & 7.72  \\
AfineQuant     & 4-4 & - & - & 11.75 & 13.97 & 1.68 & 7.72  \\
QLLM           & 4-4 & - & - & 9.09  & 11.13  & 1.68 & 7.72  \\
Atom             & 4-4 & - & - & 6.96  & 9.12   & 1.68 & 7.72  \\
QuaRot      & 4-4 & - & - & 6.31  & 9.02   & 1.68 & 7.72  \\
\rowcolor{lightpink}
SDLLM        & 4-1.58 & 1 $\times$ 8 & 0.209 & \textbf{5.48} & \textbf{7.59}  & \textbf{1.40} & \textbf{1.26} \\
\rowcolor{lightpink}
SDLLM       & 4-1.58 & 1 $\times$ 16 & 0.215 & \textbf{5.18} & \textbf{7.15}  & 2.89 & \textbf{2.60} \\
\bottomrule
\end{tabular}
\end{adjustbox}
\end{table}

\section{Experiments}

We apply the SDLLM method (our $\gamma$-SQP two-step spike encoding with symmetric quantization–based bidirectional encoding and segmented membrane potentials as sparse components, which replaces dense matrix multiplications with spike-driven sparse additions) to LLaMA2-7B, LLaMA2-13B, and LLaMA3-8B, as well as the newer LLM Qwen2.5-14B, and systematically evaluate performance on commonsense question answering (PIQA, ARC-easy, ARC-challenge, HellaSwag, and WinoGrande) and more complex language generation tasks, including reading comprehension (BoolQ and SQuAD), world knowledge (TriviaQA), and 
mathematical reasoning (GSM8K). Our core focus is to explore whether LLMs at the tens-of-billions scale can be constructed as spike-driven SNNs that replace dense matrix multiplications with sparse additions through spike encoding.
This work provides an initial exploration of this direction and offers new insights into leveraging spike sparsity in LLM settings, while also providing inspiration for the broader neuromorphic spiking computing ecosystem.

\begin{table*}
\renewcommand{\arraystretch}{0.88}
\centering
\setcaptionwidth{1} 
\caption{Evaluation of Zero-shot QA (↑) results of LLaMA2-7B and 13B under QLLM settings.}
\label{tab:3}
\setlength{\tabcolsep}{3pt}
\vspace{0.5em}
\begin{adjustbox}{width=\linewidth} 
\begin{tabular}{l c c c c c c c c c c c c@{\hspace{3pt}}c}
\toprule
Model & Spike & Bit & $T \times D$ & $R$ & PIQA & ARC-e & ARC-c & BoolQ & HellaS & WinoG & Avg. & FLOPs(T)  & Power(J)\\
\midrule
LLaMA2-7B  & \ding{55} & - & - & -          & 76.88 & 53.54 & 40.53 & 71.13 & 72.96 & 67.25 & 63.72 &  6.91 & 31.77  \\
SmoothQuant      & \ding{55} & 4-4 & - & - & 60.17 & 35.23 & 27.13 & 57.92 & 37.08 & 49.57 & 44.52 & 0.86 & 3.97   \\ 
OS+              & \ding{55} & 4-4 & - & - & 63.11 & 39.10 & 28.84 & -     & 51.30 & 45.93 & 45.66 &  0.86 & 3.97   \\ 
OmniQuant        & \ding{55} & 4-4 & - & - & 65.61 & 44.28 & 30.38 & 62.66 & 53.51 & 51.85 & 51.38 & 0.86 & 3.97   \\
AffineQuant      & \ding{55} & 4-4 & - & - & 67.36 & 44.23 & 31.91 & 62.75 & 54.34 & 55.18 & 52.64 & 0.86 & 3.97   \\ 
QLLM             & \ding{55} & 4-4 & - & - & 67.68 & 45.29 & 32.09 & 62.42 & 58.45 & 56.59 & 51.60 &  0.86 & 3.97   \\ 
Atom             & \ding{55} & 4-4 & - & - & 69.75 & 47.35 & 34.22 & 62.42 & 63.21 & 56.51 & 55.58 & 0.86 & 3.97   \\ 
DuQuant          & \ding{55} & 4-4 & - & - & 75.24 & 51.89 & 36.77 & 67.86 & 69.54 & 62.12 & 60.57 &  0.86 & 3.97   \\ 
\rowcolor{lightpink}
SDLLM            & \ding{51} & 4-1.58 & 1 $\times$ 8 & 0.216 &74.54 	&51.89 	&38.74 	&68.81 	&69.00 	&63.54 	&\textbf{61.09} & \textbf{0.75}& \textbf{0.67} \\    
\midrule
LLaMA2-13B & \ding{55}  & - & - & -          & 79.05 & 57.91 & 44.20 & 69.02 & 76.60 & 69.69 & 66.08 & 13.42 & 61.74  \\
SmoothQuant      & \ding{55} & 4-4 & - & - & 62.30 & 40.28 & 30.72 & 60.49 & 42.24 & 49.96 & 47.67 &  1.68  & 7.72  \\
OS+              & \ding{55} & 4-4 & - & - & 64.47 & 41.46 & 32.17 & -     & 59.30 & 51.38 & 49.76 &  1.68  & 7.72  \\
OmniQuant        & \ding{55} & 4-4 & - & - & 69.80 & 47.22 & 33.79 & 65.47 & 59.34 & 55.49 & 55.19 &  1.68  & 7.72  \\
AffineQuant      & \ding{55} & 4-4 & - & - & 68.55 & 47.64 & 32.34 & 66.97 & 59.97 & 55.07 & 55.09 &  1.68  & 7.72  \\
QLLM             & \ding{55} & 4-4 & - & - & 70.46 & 48.48 & 34.39 & -     & 62.80 & 55.41 & 54.31 &  1.68  & 7.72  \\
Atom             & \ding{55} & 4-4 & - & - & 71.16 & 50.89 & 37.88 & 63.91 & 67.51 & 58.40 & 58.29 &  1.68  & 7.72  \\
DuQuant          & \ding{55} & 4-4 & - & - & 77.31 & 55.60 & 41.55 & 66.61 & 73.68 & 66.06 & 63.47 &  1.68  & 7.72  \\
\rowcolor{lightpink}
SDLLM            & \ding{51} & 4-1.58 & 1 $\times$ 8 & 0.209 & 77.26 &	57.41 &	41.55 &	66.67 &	73.33 &	66.69  &\textbf{63.82} &  \textbf{1.40} 	&\textbf{1.26}
\\
\bottomrule
\end{tabular}
\end{adjustbox}
\vspace{4pt}
\renewcommand{\arraystretch}{0.88}
\centering
\setcaptionwidth{1} 
\caption{Evaluation of Zero-shot QA (↑) results of Qwen2.5-14B.} 
\label{tab:app_zero_shot_qwen}
\setlength{\tabcolsep}{3pt}
\vspace{0.5em}
\begin{adjustbox}{width=\linewidth} 
\begin{tabular}{l c c c c c c c c c c c c@{\hspace{3pt}}c}
\toprule
Model & Spike & Bit & $T \times D$ & $R$ & PIQA & ARC-e & ARC-c & BoolQ & HellaS & WinoG & Avg. & FLOPs(T)  & Power(J)\\
\midrule
Qwen2.5-14B  & \ding{55} & - & - & - & 82.10 & 79.12 & 58.87 & 85.26 & 82.91 & 75.30 & 77.26  &13.53 &	62.23
\\
RTN         & \ding{55} & 4-4 & - & -  & 51.31 & 32.91 & 24.32 & 50.40 & 29.29 & 47.91  & 39.35  & 1.69 & 7.78 \\
GPTQ        & \ding{55} & 4-4 & - & - & 51.80 & 26.64 & 23.63 & 41.13 & 26.27 & 49.17 & 36.73 & 1.69 & 7.78 \\ 
SmoothQuant & \ding{55} & 4-4 & - & -  & 51.20 & 26.09 & 26.54 & 41.13 & 26.27 & 49.17 & 36.73  & 1.69 & 7.78 \\ 
\rowcolor{lightpink}
SDLLM      & \ding{51} & 4-1.58 & 1 $\times$ 8 & 0.212 & 79.00 & 77.82 & 52.13 & 80.00 & 78.28 & 67.80 & \textbf{72.51}  & \textbf{1.43} &	 \textbf{1.29}  \\
\rowcolor{lightpink}
SDLLM & \ding{51} & 4-1.58 & 1 $\times$ 16 & 0.213 & 81.28& 80.43 & 55.29& 82.64& 81.41& 74.82 & \textbf{76.15}  & 2.88 &	 \textbf{2.59}  \\
\midrule
RTN         & \ding{55} & 6-6 & - & -  & 80.41& 81.40 & 56.57 & 84.19 & 81.52 & 71.27  & 75.89  &3.81 &	17.50 \\
GPTQ        & \ding{55} & 6-6  & - & - & 79.71 & 76.85 & 52.82 & 80.24 & 80.11 & 70.17 & 73.32  &3.81 & 17.50 \\ 
SmoothQuant & \ding{55} & 6-6 & - & -  & 79.33 & 78.96 & 55.03 & 80.89 & 79.12 & 68.67 & 73.66  &3.81 & 17.50 \\ 

\rowcolor{lightpink}
SDLLM       & \ding{51} & 6-1.58 & 1 $\times$ 32 & 0.215 & 82.48& 79.04 & 57.76& 84.83 & 82.85& 75.22 & \textbf{77.03} & 8.73 &	 \textbf{7.85}  \\
\bottomrule
\end{tabular}
\end{adjustbox}
\end{table*}

\begin{table*}
\vspace{4pt}
\setcaptionwidth{1} 
\caption{Evaluation of a broader and more challenging range of tasks (↑) on the LLaMA family: reading Comprehension (SQuAD), world Knowledge (TriviaQA), and math (GSM8K). 
Metrics include strict accuracy (Str.), flexible accuracy (Flex.), exact match (EM), token-level F1, and Has-Answer (HA) and No-Answer (NA) subsets.
}
\label{tab:comparison_methods}
\setlength{\tabcolsep}{3pt}
\vspace{0.5em}
\begin{adjustbox}{width=\linewidth} 
\begin{tabular}{lccccccccccccccccccc}
\toprule
\multirow{2}{*}{Model} & \multirow{2}{*}{Spike}  & \multirow{2}{*}{Bit} & \multirow{2}{*}{$T \times D$} &  \multirow{2}{*}{$R$} & \multicolumn{2}{c}{GSM8K} & \multicolumn{5}{c}{SQuAD} & TriviaQA  & FLOPs & Power \\
\cmidrule(lr){6-7}
\cmidrule(lr){8-12}
& &  &  &  & Str.&Flex. &EM &F1 &HA-EM &HA-F1 &NA-F1 &EM &(T) &(J)\\
\midrule
LLaMA2-7B & \ding{55} & - & - & - & 15.30 & 15.30 & 16.77 & 24.21 & 24.68 & 39.37 & 8.66 & 64.14 &  6.91 & 31.77  \\
QuaRot & \ding{55} & 4-4 & - & - & 1.97 & 2.73 & 17.04 & 24.76 & 29.01 & 44.26 & 4.77 & 33.20 &  0.86 & 3.97  \\
\rowcolor{lightpink}
SDLLM & \ding{51} & 4-1.58 & 1 $\times$ 8 & 0.216 & 6.36 & 6.67 & 19.16 & 25.52 & 18.96 & 31.52 & 19.37 & 51.71  & \textbf{0.75} & \textbf{0.67} \\
\rowcolor{lightpink}
SDLLM & \ding{51} & 4-1.58 & 1 $\times$ 16 & 0.221 & 10.00 & 10.15 & 13.88 & 22.53 & 20.76 & 37.86 & 6.82 & 58.86  & 1.53 & \textbf{1.37} \\
\midrule
QuaRot & \ding{55} & 6-6 & - & - & 13.48 & 13.79 & 16.50 & 24.37 & 22.22 & 37.77 & 10.64 & 62.72  & 1.94 & 8.94  \\
\rowcolor{lightpink}
SDLLM &\ding{51} & 6-1.58 & 1 $\times$ 32 & 0.222 & 14.85 & 1.39 & 18.05 & 25.12 & 24.82 & 38.77 & 11.12 & 64.05  & 4.60 & \textbf{4.14} \\
\midrule
LLaMA2-13B & \ding{55} & - & - & - & 22.73 & 22.88 & 22.77 & 29.67 & 37.19 & 50.82 & 7.98 & 70.45  & 13.42 & 61.74  \\
QuaRot & \ding{55} & 4-4 & - & - & 12.42 & 12.42 & 21.96 & 29.62 & 41.78 & 56.90 & 1.64 & 52.25  & 1.68 & 7.72  \\
\rowcolor{lightpink}
SDLLM & \ding{51} & 4-1.58 & 1 $\times$ 8 & 0.209 & 15.91 & 16.21 & 21.29 & 28.49 & 37.52 & 51.75 & 4.64 & 62.44  & \textbf{1.40} & \textbf{1.26} \\
\rowcolor{lightpink}
SDLLM & \ding{51} & 4-1.58 & 1 $\times$ 16 & 0.215 & 21.06 & 21.36 & 21.15 & 28.82 & 33.07 & 48.21 & 8.94 & 66.45 & 2.89 & \textbf{2.60} \\
\midrule
QuaRot & \ding{55} & 6-6 & - & - & 21.82 & 21.97 & 23.85 & 30.42 & 41.25 & 54.24 & 6.00 & 69.14  & 3.77 & 17.36  \\
\rowcolor{lightpink}
SDLLM & \ding{51} & 6-1.58 & 1 $\times$ 32 & 0.217 & 21.52 & 21.82 & 21.99 & 29.06 & 34.46 & 48.43 & 9.21 & 69.84  & 8.74 & \textbf{7.86} \\
\midrule
LLaMA3-8B & \ding{55} & - & - & - & 48.64 & 49.55 & 26.71 & 32.58 & 52.69 & 64.30 & 0.07 & 71.58  & 7.97 & 36.67  \\
\rowcolor{lightpink}
SDLLM & \ding{51} & 4-1.58 & 1 $\times$ 8 & 0.210 & 18.80 & 19.41 & 19.65 & 27.43 & 37.66 & 52.59 & 0.05 & 49.31  & \textbf{0.84} & \textbf{0.75 }\\
\rowcolor{lightpink}
SDLLM & \ding{51} & 4-1.58 & 1 $\times$ 16 & 0.211 & 31.84 & 32.60 & 24.55 & 30.89 & 48.37 & 60.89 & 0.14 & 60.40  & 1.68 & \textbf{1.51} \\
\rowcolor{lightpink}
SDLLM & \ding{51} & 6-1.58 & 1 $\times$ 32 & 0.213 & 46.55 & 47.16 & 28.32 & 34.74 & 54.31 & 66.61 & 0.05 & 70.26  & 5.09 & \textbf{4.58} \\
\bottomrule
\end{tabular}
\end{adjustbox}
\end{table*}

\begin{table}
\renewcommand{\arraystretch}{0.88}
\centering
\setcaptionwidth{1}
\caption{Evaluation of PPL (↓) metrics on Wikitext2 and C4 for Qwen2.5-14B.}
\label{tab:app_ppl_qwen}
\large \fontseries{sb}\selectfont
\setlength{\tabcolsep}{3pt}
\vspace{0.5em}
\begin{adjustbox}{width=1\linewidth} 
\begin{tabular}{@{}lcccccc@{\hspace{2pt}}c@{}}
\toprule
Model   & Bit & $T \times D$ & $R$ & Wiki & C4 & FLOPs(T)  & Power(J)\\
\midrule
Qwen2.5-14B  & - & - & - & 5.29 & 10.35 &  13.53 & 62.23 \\
RTN  & 4-4 & - & - & 2e3 & 2e3 &  1.69 & 7.78 \\
GPTQ  & 4-4 & - & - & 6e3 & 4e3 &  1.69 & 7.78 \\
SmoothQuant  & 4-4 & - & - & 2e4 & 2e4  & 1.69 & 7.78 \\
\rowcolor{lightpink}
SDLLM & 4-1.58 & 1 $\times$ 8 & 0.212 & \textbf{8.19} & \textbf{16.12} &  \textbf{1.43} & \textbf{1.29} \\
\rowcolor{lightpink}
SDLLM  & 4-1.58 & 1 $\times$ 16 & 0.213 & \textbf{6.13} & \textbf{11.14} &  2.88 & \textbf{2.59} \\
\bottomrule
\end{tabular}
\end{adjustbox}
\end{table}

\subsection{Main Results}
\paragraph{Comparisons with SpikeLLM.} 
As shown in Table~\ref{tab:1}, we compare SDLLM with SpikeLLM, both of which are built upon the QuaRot rotation-based quantization framework for weight quantization and adopt round-to-nearest (RTN) for weight encoding. Experimental results show that, on the LLaMA-2-7B and LLaMA-2-13B models, SDLLM achieves higher task performance by 5.69\% and 4.23\%, respectively, compared to SpikeLLM. Additionally, SDLLM reduces FLOPs by 1.4$\times$, and energy consumption by 7$\times$, outperforming SpikeLLM in both accuracy and efficiency. In the time step configuration, SpikeLLM uses a single timestep $T$, while SDLLM employs a $T \times D$ time step, where $D$ denotes the unfolded time step, supporting both synchronous (multi-step latency) and asynchronous (single-step latency) spike firing modes, as shown in Fig. \ref{fig:asyn}.

\paragraph{Membrane Potential Clipping.} We evaluate the performance of spike-based models under the membrane potential clipping scheme, as shown in Table~\ref{tab:2}. Compared to SDLLM with symmetric quantization, applying a clipping threshold at the 60\% quantile (\(q = 0.6\)) reduces the spike firing rate in the QKV layer from 0.22 to 0.10, leading to  to a 2.2$\times$ reduction in FLOPs and energy consumption. Compared to SpikeLLM,  the QKV layer of SDLLM reduces energy consumption by 15$\times$ while maintaining comparable accuracy.

\paragraph{Comparison with Low-Bit Numerical Inference Models.}
Given that INT4 quantization models are widely regarded as promising solutions for resource-constrained inference, we compare SDLLM with several representative conventional low-bit numerical inference models, including SmoothQuant \citep{xiao2023smoothquant}, OS + \citep{wei2023outlier}, OmniQuant \citep{shao2023omniquant}, AffineQuant \citep{ma2024affinequant}, QLLM \citep{liu2024qllm}, Atom \citep{zhao2023atom}, DuQuant \citep{lin2024duquant}. As shown in Tables~\ref{tab:3} and~\ref{tab:4}, on zero-shot question answering tasks, SDLLM consistently outperforms DuQuant on LLaMA-2-7B, LLaMA-2-13B, and LLaMA3-8B models, achieving state-of-the-art (SOTA) performance on these models. As reported in Table~\ref{tab:app_llama_ppl}, SDLLM also attains lower perplexity (PPL) than QuaRot on the Wikitext2~\citep{merity2016pointer} and C4 \citep{raffel2020t5} datasets in language modeling tasks. 
Meanwhile, compared with the INT4 baseline, SDLLM further reduces FLOPs by $1.2\times$, and reduces energy consumption by $6\times$. Different from low-bit quantization models in edge scenarios, SDLLM presents a potential solution from the SNN perspective. By addressing the insufficient spike representation in SNNs, it achieves INT4 SOTA-level task performance while utilizing the efficient characteristics of spike-driven approaches.

\begin{table*}
\setcaptionwidth{1} 
\caption{Evaluation of Zero-shot QA (↑) results of LLaMA3-8B under DuQuant settings.} 
\label{tab:4}
\setlength{\tabcolsep}{3pt}
\vspace{0.5em}
\begin{adjustbox}{width=\linewidth} 
\begin{tabular}{l c c c c c c c c c c c c@{\hspace{3pt}}c}
\toprule
Model & Spike & Bit & $T \times D$ & $R$ & PIQA & ARC-e & ARC-c & BoolQ & HellaS & WinoG & Avg. & FLOPs(T)  & Power(J)\\
\midrule
LLaMA3-8B & \ding{55}  & - & - & -  & 80.85 & 77.78 & 53.41 & 81.28 & 79.16 & 72.84 & 74.22 & 7.97 & 36.67  \\
SmoothQuant    & \ding{55} & 4-4 & - & - & 54.57 & 31.9  & 24.23 & 52.72 & 31.26 & 51.14 & 40.97 & 1.00 & 4.58   \\ 
OmniQuant      & \ding{55} & 4-4 & - & - & 50.22 & 26.94 & 24.57 & 37.98 & 26.55 & 50.20 & 36.08 & 1.00 & 4.58   \\ 
AffineQuant    & \ding{55} & 4-4 & - & - & 50.71 & 25.93 & 26.02 & 40.55 & 26.07 & 48.46 & 36.29 & 1.00 & 4.58   \\ 
Atom           & \ding{55} & 4-4 & - & - & 62.95 & 49.45 & 30.12 & 60.31 & 53.75 & 56.04 & 52.10 & 1.00 & 4.58   \\ 
DuQuant        & \ding{55} & 4-4 & - & - & 75.68 & 68.48 & 41.81 & 71.99 & 73.07 & 66.22 & 66.21 & 1.00 & 4.58   \\
\rowcolor{lightpink}
SDLLM          & \ding{51} & 4-1.58 & 1 $\times$ 8 & 0.210 & 75.90 &	67.05 	&44.37 	&72.45 	&73.26 &67.01 &	\textbf{66.67} &\textbf{0.84} 	&\textbf{0.75} 
 \\
\bottomrule
\end{tabular}
\end{adjustbox}
\centering
\setcaptionwidth{1} 
\caption{Ablation study of adjusting and unfolded time steps to improve model task performance.}
\label{tab:6}
\setlength{\tabcolsep}{3pt}
\vspace{0.5em}
\begin{adjustbox}{width=\linewidth} 
\begin{tabular}{l c c c c c c c c c c c c@{\hspace{3pt}}c}
\toprule
Model & Spike & Bit & $T \times D$ & $R$ & PIQA & ARC-e & ARC-c & BoolQ & HellaS & WinoG & Avg. & FLOPs(T)  & Power(J)\\
\midrule
LLaMA2-7B  & \ding{55}  & - & -                        & -          & 78.84 & 74.54 & 46.33 & 77.74 & 75.97 & 69.22 & 70.44 & 6.91 & 31.77    \\
SDLLM            & \ding{51} & 4-1.58 & 1 $\times$ 8 & 0.216 & 75.84 & 69.65 & 41.21 & 74.01 & 71.75 & 66.14 & 66.43 & 0.75 & 0.67 \\ 	 
SDLLM            & \ding{51} & 4-1.58 & 1 $\times$ 8.3 & 0.216 & 77.31 & 70.29 & 41.13 & 72.42 & 73.05 & 67.64 & 66.97 & 0.77 & 0.70 \\
SDLLM            & \ding{51} & 4-1.58 & 1 $\times$ 16  & 0.221 & 78.02 & 72.05 & 44.28 & 75.87 & 74.49 & 68.11 & 68.80 & 1.53 & 1.37 \\
\midrule
LLaMA2-13B & \ding{55}  & - & -                       & -          & 80.63 & 77.48 & 49.23 & 80.73 & 79.37 &71.74  & 80.69 &  13.42 & 61.74 \\
SDLLM            & \ding{51} & 4-1.58 & 1 $\times$ 8 & 0.209 & 78.51 & 74.12 & 46.16 & 78.26 & 76.36 & 69.85 & 70.54 & 1.40 & 1.26 \\
SDLLM            & \ding{51} & 4-1.58 & 1 $\times$ 8.3 & 0.209 & 79.33 & 73.99 & 47.70 & 77.09 & 76.94 & 69.85 & 70.82 &  1.46 & 1.31 \\ 
SDLLM            & \ding{51} & 4-1.58 & 1 $\times$ 16 & 0.215 & 80.25 & 76.77 & 49.40 & 77.49 & 77.92 & 69.77 & 71.93 &  2.89 & 2.60 \\
\bottomrule
\end{tabular}
\end{adjustbox}
\vspace{-12pt}
\end{table*}

\begin{table}
\centering
\caption{Ablation study of $\gamma$-SQP on PPL (↓) metrics for Wikitext2 and C4. SDLLM uses $ \mathbf X\,\mathrm{Diag}(\gamma)\mathbf Q$ semantics, while QuaRot uses $\mathbf Q^\top \mathbf W\,\mathrm{Diag}(\gamma)$ semantics.}
\label{tab:gamma-SQP_ppl}
\setlength{\tabcolsep}{3pt}
\begin{adjustbox}{width=1\linewidth} 
\begin{tabular}{lccccccccccccccc}
\toprule
\multirow{2}{*}{Model}  & \multirow{2}{*}{Spike} & \multirow{2}{*}{QOp}  & \multicolumn{2}{c}{LLaMA2-7B}  & \multicolumn{2}{c}{LLaMA2-13B} \\
\cmidrule(lr){4-5}
\cmidrule(lr){6-7}
&  &  & Wiki & C4 & Wiki & C4\\
\midrule
FP16 & \ding{55} & - & 5.47 & 7.26 & 4.88 & 6.73 \\
\midrule
\multirow{4}{*}{QuaRot} & \multirow{4}{*}{\ding{55}} & QKV & 6.37 & 8.62 & 5.32 & 7.39 \\
 & & QKV+Att & 6.71 & 9.09 & 5.53 & 7.69 \\
 & & QKV+Att+FFN & 8.73 & 12.27 & 6.31 & 9.02 \\
 \midrule
\multirow{4}{*}{SDLLM} & \multirow{4}{*}{\ding{51}} & QKV  & 5.61 & 7.40 & 4.96 & 6.80 \\
 & & QKV+Att & 5.77 & 7.61 & 5.09 & 6.95 \\
 & & QKV+Att+FFN & 6.40 & 8.58 & 5.48 & 7.59 \\
\bottomrule
\end{tabular}
\end{adjustbox}
\end{table}

\begin{table}
\centering
\caption{Ablation study of the inference cost with or without using sparse components to control spike firing rate.}
\label{tab:app_ablation}
\setlength{\tabcolsep}{3pt}
\begin{adjustbox}{width=1\linewidth} 
\begin{tabular}{lccccccccccccccc}
\toprule
Model  & Spike  & Bit  &  $T \times D$ & $R$  & FLOPs(T)  & Power(J)\\
\midrule
LLaMA2-7B  & \ding{55}  & - & -                        & -              & 6.91 & 31.77    \\
SDLLM-step1 
                 & \ding{51} & 4-4 & 1 $\times$ 1                      & -   & 0.86 & 3.97 \\
SDLLM-step2             & \ding{51} & 4-1.58 & 1 $\times$ 8  & 0.216   & 0.75 & 0.67 \\
SDLLM-step2               & \ding{51} & 4-1 & 1 $\times$ 15 & 0.500  & 1.62 & 1.46 \\ 
SDLLM-step1  
                 & \ding{51} & 6-6 & 1 $\times$ 1                      & -   &1.94  & 8.94 \\
SDLLM-step2              & \ding{51} & 6-1.58 & 1 $\times$ 32 & 0.222   & 4.60 & 4.14 \\
SDLLM-step2               & \ding{51} & 6-1 & 1 $\times$ 63 & 0.500   &10.20 & 9.18 \\
\midrule
LLaMA2-13B & \ding{55}  & - & -                       & -             & 13.42 & 61.74 \\
SDLLM-step1  
                 & \ding{51} & 4-4 & 1 $\times$ 1 & -   & 1.68  & 7.72 \\
SDLLM-step2               & \ding{51} & 4-1.58 & 1 $\times$ 8 & 0.209  & 1.40  & 1.26 \\
SDLLM-step2                & \ding{51} & 4-1 & 1 $\times$ 15 & 0.500   & 3.15  & 2.83 \\
SDLLM-step1  
                 & \ding{51} & 6-6 & 1 $\times$ 1 & -   & 3.77  & 17.36  \\
SDLLM-step2               & \ding{51} & 6-1.58 & 1 $\times$ 32  & 0.217   & 8.74  & 7.86 \\
SDLLM-step2               & \ding{51} & 6-1 & 1 $\times$ 63 & 0.500   & 19.82 & 17.84 \\
\bottomrule
\end{tabular}
\end{adjustbox}
\end{table}

\begin{table*}
\renewcommand{\arraystretch}{0.88}
\centering
\setcaptionwidth{1} 
\caption{Additional Evaluation of Zero-shot QA (↑) results of LLaMA2-7B and 13B with 6-bit weights under QLLM settings.}
\label{tab:sup_1}
\setlength{\tabcolsep}{3pt}
\vspace{0.5em}
\begin{adjustbox}{width=\linewidth} 
\begin{tabular}{l c c c c c c c c c c c c@{\hspace{3pt}}c}
\toprule
Model & Spike & Bit & $T \times D$ & $R$ & PIQA & ARC-e & ARC-c & BoolQ & HellaS & WinoG & Avg. & FLOPs(T)  & Power(J)\\
\midrule
LLAMA-2-7B  & \ding{55}  & - & - & - & 76.88 & 53.54 & 40.53 & 71.13 & 72.96 & 67.25 & 63.72  & 6.91 & 31.77      \\
SmoothQuant      & \ding{55} & 6-6 & - & - & 75.57 & 53.62 & 39.93 & 69.54 & 71.76 & 66.14 & 62.76 & 1.94 & 8.94 \\
OS+              & \ding{55} & 6-6 & - & - & 76.22 & 52.74 & 40.70 & -     & 71.89 & 65.19 & 61.35 & 1.94 & 8.94  \\
OmniQuant        & \ding{55} & 6-6 & - & - & 76.55 & 53.83 & 40.96 & 68.75 & 55.89 & 65.59 & 60.26 & 1.94 & 8.94 \\
QLLM             & \ding{55} & 6-6 & - & - & 77.48 & 52.99 & 39.33 & -     & 71.38 & 65.98 & 61.43 & 1.94 & 8.94 \\
DuQuant          & \ding{55} & 6-6 & - & - & 76.99 & 52.99 & 40.87 & 70.40 & 72.49 & 67.32 & 63.51 & 1.94 & 8.94 \\
\rowcolor{lightpink}
SDLLM            & \ding{51} & 6-1.58 & 1 $\times$ 32 
                                         & 0.222 & 76.99  & 53.75 & 41.04 & 70.64 & 72.84 &  67.25 & \textbf{63.75}  &4.60 &\textbf{4.14}\\    
\midrule

LLAMA2-13B  & \ding{55} & -  & - & - & 79.05 & 57.91 & 44.20 & 69.02 & 76.60 & 69.69 & 66.08 & 13.42& 65.77 \\
SmoothQuant      & \ding{55} & 6-6 & - & - & 78.29 & 57.41 & 43.86 & 69.50 & 75.02 & 66.93 & 65.17  & 3.77 & 17.36 \\
OS+              & \ding{55} & 6-6 & - & - & 78.29 & 59.13 & 43.34 & -     & 75.37 & 67.56 & 64.74  & 3.77 & 17.36 \\
OmniQuant        & \ding{55} & 6-6 & - & - & 78.24 & 57.58 & 43.86 & 71.10 & 75.52 & 68.35 & 65.78  & 3.77 & 17.36 \\
AffineQuant      & \ding{55} & 6-6 & - & - & 78.35 & 57.58 & 43.34 & 66.73 & 74.71 & 68.59 & 64.88  & 3.77 & 17.36 \\
QLLM             & \ding{55} & 6-6 & - & - & 78.78 & 58.29 & 43.77 & -     & 75.10 & 68.43 & 64.87  & 3.77 & 17.36 \\
DuQuant          & \ding{55} & 6-6 & - & - & 78.62 & 56.94 & 43.43 & 68.35 & 76.19 & 69.22 & 65.46  & 3.77 & 17.36 \\
\rowcolor{lightpink}
SDLLM            & \ding{51} & 6-1.58 & 1 $\times$ 32
                                         & 0.217 & 79.05 & 57.66 & 44.20 & 67.83 & 76.42 & 69.93 & \textbf{65.85}  &8.74 &\textbf{7.86}\\

\bottomrule
\end{tabular}
\end{adjustbox}

\vspace{4pt}
\setcaptionwidth{1} 

\caption{Additional Evaluation of Zero-shot QA (↑) results of LLaMA3-8B with 6-bit weights under DuQuant settings.} 
\label{tab:sup_2}
\setlength{\tabcolsep}{3pt}
\vspace{0.5em}
\begin{adjustbox}{width=\linewidth} 
\begin{tabular}{l c c c c c c c c c c c c@{\hspace{3pt}}c}
\toprule
Model & Spike & Bit & $T \times D$ & $R$ & PIQA & ARC-e & ARC-c & BoolQ & HellaS & WinoG & Avg. & FLOPs(T) & Power(J)\\
\midrule
LLaMA3-8B        & \ding{55} & - & - & - & 80.85 & 77.78 & 53.41 & 81.28 & 79.16 & 72.84 & 74.22 & 7.97 & 36.67 
\\
SmoothQuant    & \ding{55} & 6-6 & - & -  & 78.94 & 75.88 & 49.49 & 77.58 & 77.39 & 70.80 & 71.68 & 2.24 & 10.31  \\
OmniQuant      & \ding{55} & 6-6  & - & - & 78.90 & 73.95 & 47.35 & 74.95 & 76.77 & 70.56 & 70.41 & 2.24 &	10.31 \\ 
AffineQuant    & \ding{55} & 6-6 & - & -  & 78.73 & 73.32 & 46.08 & 74.59 & 77.08 & 70.88 & 70.11 & 2.24 &	10.31 \\ 

DuQuant      & \ding{55} & 6-6 & - & -  & 80.20 & 77.27 & 52.05 & 80.12 & 79.14 & 72.77 & 73.59 & 2.24 &	10.31  \\
\rowcolor{lightpink}
SDLLM  & \ding{51} & 6-1.58 & 1 $\times$ 32 & 0.213  & 80.20& 77.23 & 52.22& 82.05& 79.01& 73.56 & \textbf{74.04} & 5.09 & \textbf{4.58} 
 \\
\bottomrule
\end{tabular}
\end{adjustbox}
\end{table*}

\paragraph{Evaluation on Qwen2.5-14B.}
We further evaluate SDLLM on the Qwen2.5-14B model (Tables~\ref{tab:app_zero_shot_qwen} and~\ref{tab:app_ppl_qwen}). 
On zero-shot question answering tasks, SDLLM achieves performance close to the FP16 baseline of Qwen2.5-14B, while maintaining lower perplexity on Wikitext2 and C4. 
Compared with the INT4 baseline, SDLLM reduces FLOPs and energy consumption by $1.2\times$, $1.2\times$, and $6\times$, respectively.

\paragraph{Evaluation on More Complex Language Tasks.}
As shown in Table~7, we further evaluate SDLLM on a broader and more challenging range of tasks beyond commonsense question answering and reading comprehension (BoolQ), including mathematical reasoning (GSM8K), reading comprehension (SQuAD), and world knowledge question answering (TriviaQA), on the LLaMA-2-7B, LLaMA-2-13B, and LLaMA3-8B models. 
Overall, SDLLM demonstrates improved performance over the QuaRot baseline, while retaining its advantages in FLOPs and energy consumption. 
These results indicate that, in addition to commonsense reasoning QA tasks, SDLLM also excels in a broader and more challenging range of tasks, maintaining a favorable balance between task performance and efficiency as task difficulty increases.

\subsection{Ablation Study}
\paragraph{Performance Improvement.} As shown in Table \ref{tab:gamma-SQP_ppl}, we demonstrate the PPL metrics for different quantization operators influenced by the $\gamma$-SQP on the Wikitext2 and C4 datasets. SDLLM with $\gamma$-SQP significantly reduces PPL compared to the QuaRot baseline, both based on RTN weights. Additionally, as shown in Table \ref{tab:1}, SDLLM improves task accuracy by 6.72\% and 4.93\% on LLaMA2-7B and LLaMA2-13B, respectively, while reducing  FLOPs by 1.2× and energy consumption by 6×.
As shown in Table \ref{tab:6}, for SDLLM, when the time step \( D \) is slightly increased from 8 to 8.3 (corresponding to a KV cache of \( D = 16 \)), the accuracy of LLaMA-2-7B increases from 66.43\% to 66.97\%, and the accuracy of LLaMA-2-13B increases from 70.54\% to 70.82\%. In this configuration, compared to the INT4 baseline, LLaMA-2-7B achieves a 1.1× reduction in FLOPs, and a 6× reduction in energy consumption. Further increasing the time step \( D \) to 16 boosts LLaMA-2-7B's accuracy to 68.80\% and LLaMA-2-13B's accuracy to 71.93\%, while energy consumption is still reduced by 3×.

\paragraph{Sparse Optimization.} As shown in Table \ref{tab:app_ablation}, compared to the configuration without sparse optimization, SDLLM significantly reduces the model's spike firing rate from 0.5 to 0.2 through sparse optimization. Additionally, the time step \( D \) is reduced from 15 to 8, halving the temporal cost. Although in the asynchronous firing mode, the unfolded time step \( D \) does not affect latency, increasing \( D \) may increase the number of operations, leading to higher energy consumption. With sparse optimization, SDLLM reduces FLOPs and energy consumption by 2.2×, respectively.

\paragraph{6-bit Weights.} In SDLLM, we also perform weight quantization. Compared to the 4-bit weights used in the main experiments, when using 6-bit weights and unfolding the time step \( D \), as shown in Tables \ref{tab:sup_1} and \ref{tab:sup_2}, SDLLM achieves SOTA performance with INT6 quantization. Clearly, the LLaMA and Qwen series models perform better in general knowledge question answering tasks and more widely and challenging tasks, as well as perplexity, compared to the 4-bit weight models. However, increasing the weight bit-width and unfolding the time step \( D \) reduce the energy efficiency advantage of SDLLM, so we do not consider it as the primary approach. Nevertheless, compared to the INT6 baseline, SDLLM still reduces energy consumption by 2×.

\section{Discussion}
The goal of this work is to draw inspiration from the brain's information processing mechanism and explore how to effectively integrate spiking-driven characteristics into the inference process of large-scale models. Existing spike encoding methods primarily focus on vision or small-scale tasks, and struggle to meet the demands of large language models (LLMs) in terms of both representational capacity and computational efficiency. To address this issue, we propose a plug-and-play two-step spike encoding method based on $\gamma$-SQP. The $\gamma$-SQP optimizes the quantization process by aligning it with the semantic space learned from the training model, reducing the concentration of quantization errors and effectively improving the accuracy of spike encoding.

On this basis, we further focus on the sparsity issue of spike encoding in LLMs, as the spike firing rate directly determines the efficiency of the spiking-driven mechanism. By analyzing the relationship between membrane potential, spike count, and spike trains in the two-step spike encoding process, we reveal the patterns and rules of spike firing. We introduce two sparsification mechanisms: bidirectional encoding and membrane potential clipping. These mechanisms effectively reduce the frequency of high-count spikes, increase the occurrence of zero-count spikes, and halve the time expansion length while maintaining representational capacity, significantly lowering the model's spike firing rate and improving the efficiency of spiking-driven inference. This method demonstrates good generalizability, able to adapt to models and tasks of different scales, with competitive performance and efficiency. It provides a new perspective for exploring edge scenarios in LLMs from the SNN perspective.

Spiking-driven models naturally align with the low-power computing paradigm in terms of their computational structure. In the context of the neuromorphic spiking paradigm, we discuss the implementation of SDLLM on neuromorphic chips. The $T \times D$ time-step paradigm used by SDLLM supports both synchronous and asynchronous spike emission modes. In asynchronous emission mode, since there is no global clock, the $D$ dimension can be emitted in a very short time. However, when the $D$ dimension becomes too large, it may increase power consumption and communication bandwidth requirements on the chip, a problem that can be optimized by adjusting the concurrency of $D$.
Neuromorphic hardware typically updates neuron states through event-triggered mechanisms and features locally stored synaptic states. {Meanwhile, the asynchronous firing mode enables locally stored synaptic weights to be shared across multiple unfolded timesteps.} The transmission of spike signals triggers the computation process, providing an efficient implementation foundation for sparse, on-demand models. SDLLM does not rely on the training process of SNNs but is plug-and-play during inference, with the core operator influencing its performance being the sparse spiking-driven operator with no matrix multiplication.
As neuromorphic chips continue to evolve in programmability and representational flexibility, existing chips are already able to express different computational semantics through configurable neuron models and hierarchical spike events. This enables neuromorphic chips to effectively carry SDLLM's spiking representations. As a soft-hardware co-design paradigm, SDLLM's exploration of LLM-level spike encoding for edge scenarios provides insights for the soft-hardware co-design of next-generation neuromorphic chips.

\section{Conclusion}
In this work, we present the first spike-driven LLM that eliminates matrix multiplication entirely by leveraging sparse addition, built upon multiple LLM architectures, addressing the issues of insufficient spiking representation and sparsity at the LLM level.
In addition to comparing SNNs with full-precision ANNs, we also systematically benchmark SNNs against low-bit solutions in edge computing scenarios to assess their potential as a spiking-driven paradigm.
Our results show that, compared to low-bit edge solutions, SDLLM achieves competitive accuracy while reducing energy consumption by up to 13 $\times$.
This work provides the first compelling evidence that SNNs are not only feasible for LLMs but also have the potential to rival low-bit edge solutions in both accuracy and energy efficiency, laying a critical foundation for the next generation of neuromorphic general intelligence.

\section*{Acknowledgement}
This work was partially supported by CAS Project for Young Scientists in Basic Research (YSBR-116), National Natural Science Foundation of China (62325603, 62236009, U22A20103), Beijing Science and Technology Plan (Z241100004224011), and Shanghai NeuHelium Neuromorphic Technology Co., Ltd.

\bibliographystyle{harv}
\bibliography{refs}

\clearpage
\section*{Appendix} 
\setcounter{section}{0} 
\renewcommand{\thesection}{\Alph{section}}
\setcounter{subsection}{0}
\renewcommand{\thesubsection}{\Alph{section}.\arabic{subsection}} 
\setcounter{table}{0}
\setcounter{figure}{0}
\renewcommand{\thetable}{A\arabic{table}}
\renewcommand{\thefigure}{A\arabic{figure}} 

\section{Details of Operations and Energy Consumption}
\label{app:details_of_op_ec}
\subsection{FLOPs}
We refer to the FLOPs calculation method for \( q \)-bit operations from the Q-DETR paper. For 2-bit, 3-bit, and 4-bit operations, the FLOPs for 2-bit operations is \( \frac{1}{32} \) of the 32-bit FLOPs, for 3-bit operations it's \( \frac{1}{16} \), and for 4-bit operations it's \( \frac{1}{8} \) \citep{xu2023qdetr,liu2020birealnet}, since the current CPU can parallelize bitwise XNOR and popcount operations.

For higher-bit operations, they can be decomposed into multiple lower-bit operations. For example, 4-bit $\times$ 4-bit is decomposed into $2 \times 2$ operations of 2-bit $\times$ 2-bit, corresponding to $\frac{1}{8}$ FLOPs. We calculate $q$-bit FLOPs following the above principle for all methods considered in this paper, including all baseline and comparison methods. For SNNs, FLOPs$_{\text{spike}}$ = FLOPs$_{q\text{-bit}} \times T \times R$ \citep{xu2025neurormorphic}.

\subsection{Power}
When evaluating algorithmic efficiency, prior work in the SNN community typically abstracts away specific hardware implementation details and analyzes models using theoretical energy consumption metrics \citep{qiu2025quantized,yao2024spikev2,wang2023riformer,yang2022lead,panda2020toward}. Such theoretical estimates are primarily employed to facilitate qualitative energy comparisons between different SNN and ANN algorithms, with the aim of examining energy-efficiency trends arising from factors such as operation counts, sparsity, and bit-width configurations.

We estimate the theoretical energy consumption layer-wise, taking into account that different layers exhibit different firing rates. Specifically, we compute the FLOPs of each layer separately and sum them to obtain the overall computational cost.
For ANN models, the energy cost is estimated as
$
\text{Power} = \text{FLOPs} \times E_{\text{MAC}},
$
whereas for SNN models, the energy cost is estimated as
$
\text{Power} = \text{FLOPs} \times E_{\text{AC}}.
$
Under a 45\,nm technology node with a 32-bit floating-point implementation, the energy cost of a multiply--accumulate (MAC) operation is $E_{\text{MAC}} = 4.6$\,pJ, while that of an accumulate (AC) operation is $E_{\text{AC}} = 0.9$\,pJ~\citep{yao2025scaling,luo2024integer,lv2023spikebert}.

From a system implementation perspective, the energy characteristics of practical systems depend on a combination of data movement, computation, and control logic overhead. Unlike traditional non-neuromorphic systems, the potential energy advantage of neuromorphic systems is typically centered on sparse computation, with execution relying on event-driven logic and locally stored synaptic weights, resulting in fundamentally different organizations of computation and data access compared to conventional systems.
Current neuromorphic chip architectures have, to some extent, explored native support for various spike representations and operator-level primitives, and the evolution of these architectural features is often driven by continuous algorithmic exploration of new computational paradigms in SNN research. The proposed SDLLM model demonstrates that sparse event-stream modeling and temporal encoding can coexist with high accuracy in large-scale language model tasks, thereby offering algorithm-level insights for future event-driven neuromorphic chip design.

\begin{figure}
    \centering
    \includegraphics[width=1\linewidth]{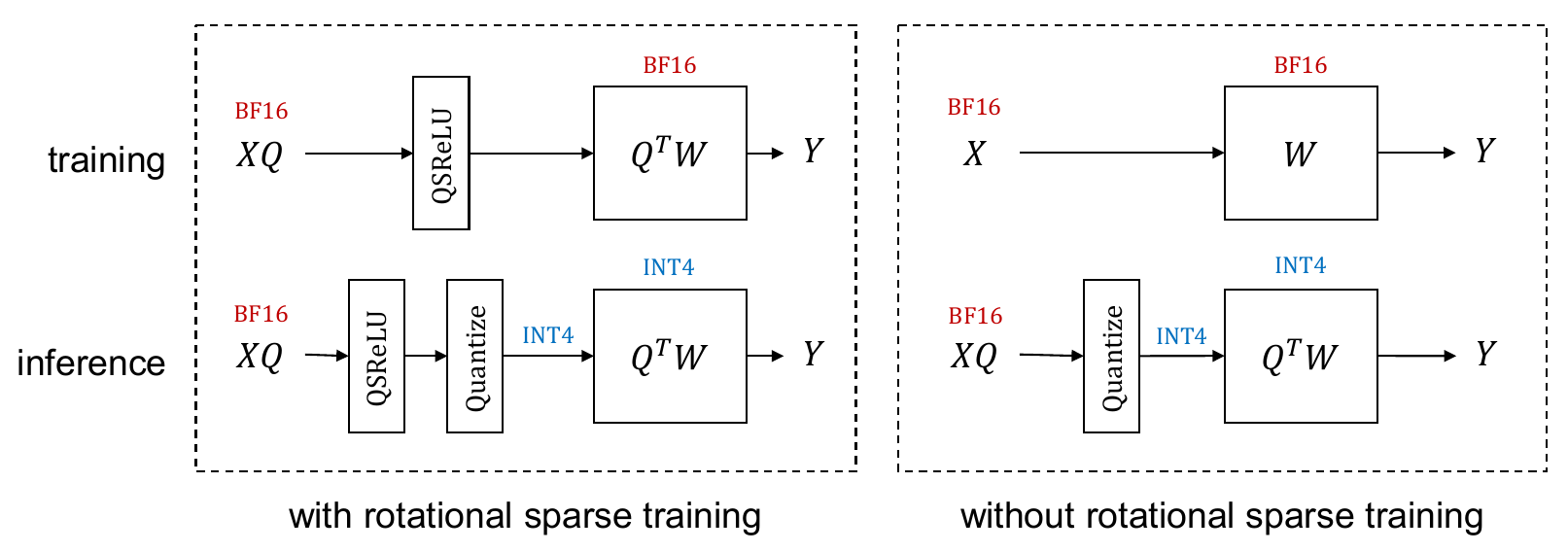}
    \caption{
    Implementation of rotational sparse training for enhancing spike sparsity.
    }
    \label{fig:sup_c}
\end{figure}

\section{Rotational Sparse Training}
\label{app:sparse_training}
\subsection{Training setup}

Following the setup in {ReLU Strikes Back} \cite{mirzadehrelu}, we fine-tune the pre-trained LLaMA series pre-trained models on the RefinedWeb dataset \citep{penedo2023refinedweb} to evaluate the performance of SDLLM under the membrane potential clipping method. {We train on 8 A800 GPUs with approximately 10 million tokens and use the AdamW optimizer with a fixed learning rate of $1.5 \times 10^{-5}$.} To improve training efficiency and reduce memory consumption, we adopt the ZeRO Stage 2 optimization strategy \citep{rajbhandari2020zero} provided by DeepSpeed for distributed management of optimizer states and gradients.

\subsection{Training Strategy}
As discussed in the \textit{Joint Sparsity and Rotation Matrices} subsection of Section~\ref{sec:4.4}, we adopt a rotational sparse training strategy to enhance quantization performance and activation sparsity during training. Specifically, during training, as illustrated in Fig.~\ref{fig:sup_c}, we apply an orthogonal rotation matrix \( Q \) only to the linear operators whose outputs are involved in sparsification, i.e., those followed by the Quantile-Shifted ReLU activation function. This transformation improves the uniformity of feature distributions and facilitates effective sparsity learning. For operators not participating in sparsification, no rotation is applied during training, thereby avoiding unnecessary computational overhead. During inference, however, we apply the rotation matrix \( Q \) uniformly to all linear operators and use the rotated weights \( Q^T W \) to ensure compatibility across both sparse and non-sparse computation paths. This strategy strikes a balance between training efficiency and inference consistency, demonstrating the practicality and generalizability of rotational sparse training.

\section{Spike Firing Details}
As mentioned earlier in Section~\ref{sec:4.1}, the computational cost of non-matrix multiplication operators is several orders of magnitude lower. Therefore, in Tab.~\ref{tab:s3}, we present the spike firing behavior and corresponding FLOPs of the linear layers in the SDLLM based on the LLaMA-2 7B baseline. In all tables, $N$ denotes the train length, and we uniformly set $N = 1024$. In addition, $D_h$ and $D_i$ represent the hidden size and intermediate size, respectively.
In addition to applying spiking to the linear layers, we also spiked the KV Cache, similar to how quantization methods process the KV Cache. The spiked KV Cache is directly involved in the computation of spiking attention.
Additionally, we visualize the firing of some neurons in Fig. \ref{fig:sup_111}.

\newpage

\begin{figure*}
    \centering
     \begin{minipage}[b]{0.28\textwidth}
        \centering
        \includegraphics[width=\textwidth]{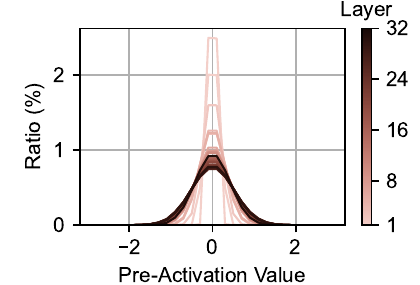}
        \scriptsize {QKV (Pre Act)}
        \label{fig:sup_d1}
    \end{minipage}
    \begin{minipage}[b]{0.36\textwidth}
        \centering
        \includegraphics[width=\textwidth]{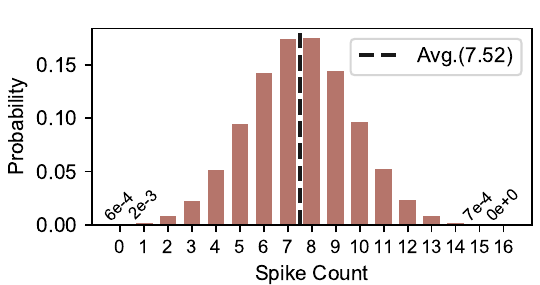}
        \scriptsize {QKV (Symmetric)}
        \label{fig:sup_d2}
    \end{minipage}
    \begin{minipage}[b]{0.24\textwidth}
        \centering
        \includegraphics[width=\textwidth]{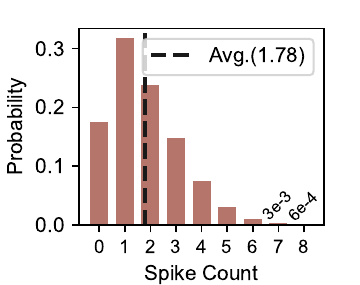}
        \scriptsize {QKV (Asymmetric)}
        \label{fig:sup_d3}
    \end{minipage}
    \begin{minipage}[b]{0.28\textwidth}
        \centering
        \includegraphics[width=\textwidth]{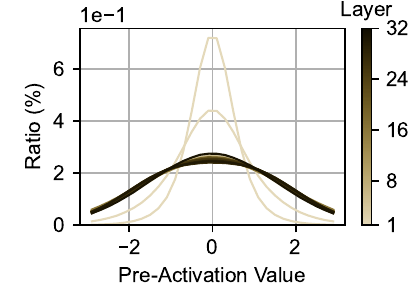}
        \scriptsize {K Cache (Pre Act)}
        \label{fig:sup_d4}
    \end{minipage}
    \begin{minipage}[b]{0.36\textwidth}
        \centering
        \includegraphics[width=\textwidth]{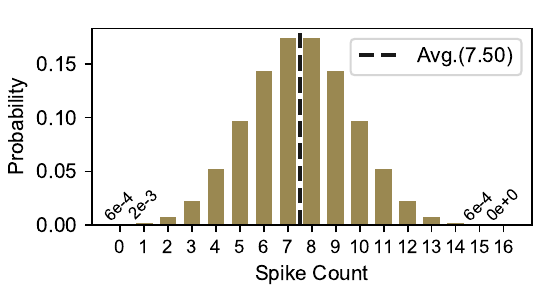}
        \scriptsize {K Cache (Symmetric)}
        \label{fig:sup_d5}
    \end{minipage}
    \begin{minipage}[b]{0.24\textwidth}
        \centering
        \includegraphics[width=\textwidth]{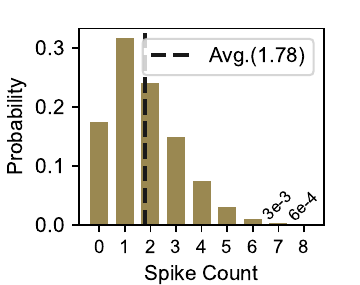}
        \scriptsize {K Cache (Asymmetric)}
        \label{fig:sup_d6}
    \end{minipage}
    \begin{minipage}[b]{0.28\textwidth}
        \centering
        \includegraphics[width=\textwidth]{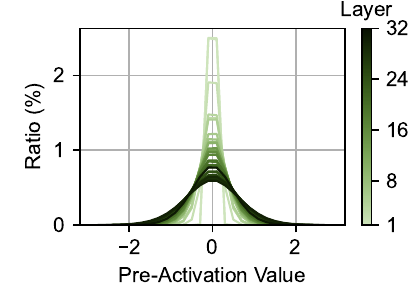}
        \scriptsize {V Cache (Pre Act)}
        \label{fig:sup_d7}
    \end{minipage}
    \begin{minipage}[b]{0.36\textwidth}
        \centering
        \includegraphics[width=\textwidth]{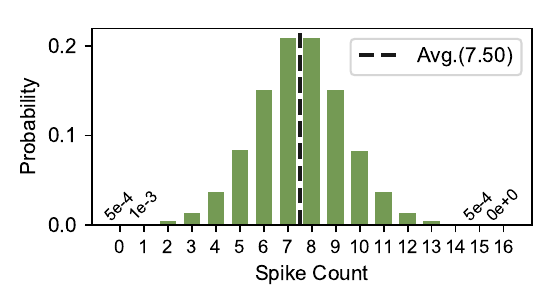}
        \scriptsize {V Cache (Symmetric)}
        \label{fig:sup_d8}
    \end{minipage}
    \begin{minipage}[b]{0.24\textwidth}
        \centering
        \includegraphics[width=\textwidth]{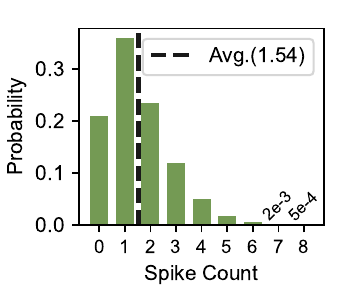}
        \scriptsize {V Cache (Asymmetric)}
        \label{fig:sup_d9}
    \end{minipage}
    \begin{minipage}[b]{0.28\textwidth}
        \centering
        \includegraphics[width=\textwidth]{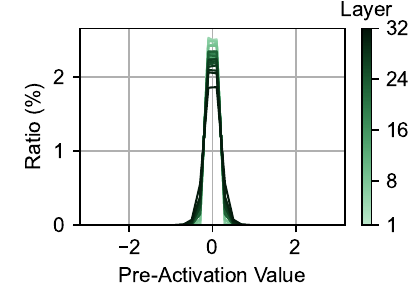}
        \scriptsize {O (Pre Act)}
        \label{fig:sup_d10}
    \end{minipage}
    \begin{minipage}[b]{0.36\textwidth}
        \centering
        \includegraphics[width=\textwidth]{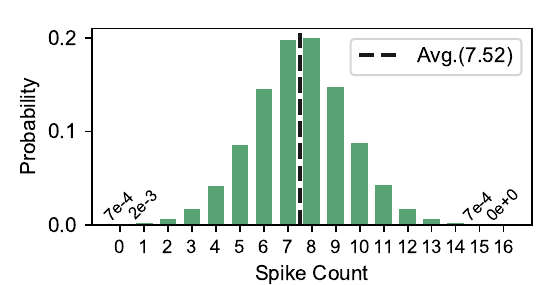}
        \scriptsize {O (Symmetric)}
        \label{fig:sup_d11}
    \end{minipage}
    \begin{minipage}[b]{0.24\textwidth}
        \centering
        \includegraphics[width=\textwidth]{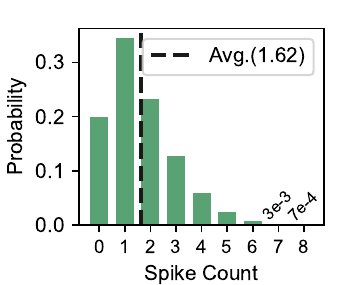}
        \scriptsize {O (Asymmetric)}
        \label{fig:sup_d12}
    \end{minipage}
    \begin{minipage}[b]{0.28\textwidth}
        \centering
        \includegraphics[width=\textwidth]{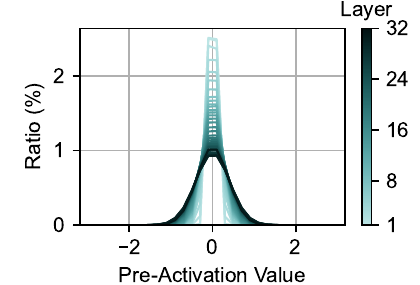}
        \scriptsize {UpGate (Pre Act)}
        \label{fig:sup_d13}
    \end{minipage}
    \begin{minipage}[b]{0.36\textwidth}
        \centering
        \includegraphics[width=\textwidth]{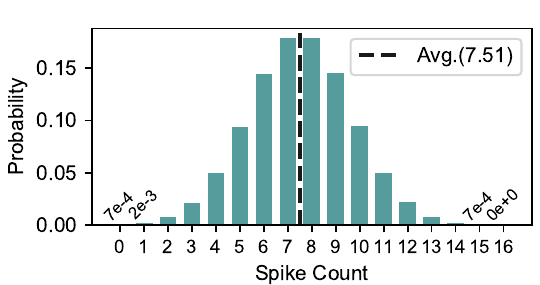}
        \scriptsize {UpGate (Symmetric)}
        \label{fig:sup_d14}
    \end{minipage}
    \begin{minipage}[b]{0.24\textwidth}
        \centering
        \includegraphics[width=\textwidth]{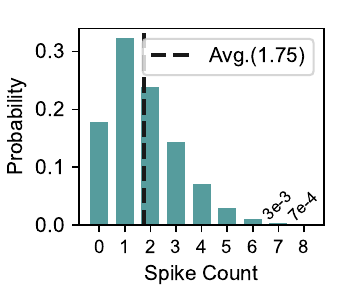}
        \scriptsize {UpGate (Asymmetric)}
        \label{fig:sup_d15}
    \end{minipage}
    \begin{minipage}[b]{0.28\textwidth}
        \centering
        \includegraphics[width=\textwidth]{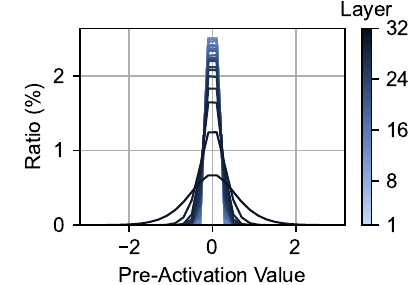}
        \scriptsize {Down (Pre Act)}
        \label{fig:sup_d16}
    \end{minipage}
    \begin{minipage}[b]{0.36\textwidth}
        \centering
        \includegraphics[width=\textwidth]{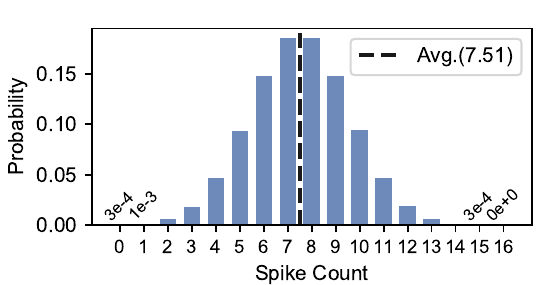}
        \scriptsize {Down (Symmetric)}
        \label{fig:sup_d17}
    \end{minipage}
    \begin{minipage}[b]{0.24\textwidth}
        \centering
        \includegraphics[width=\textwidth]{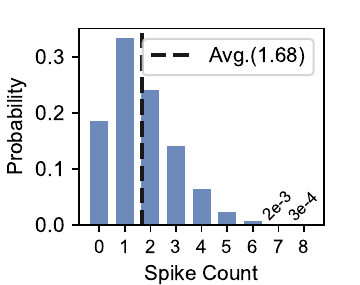}
        \scriptsize {Down (Asymmetric)}
        \label{fig:sup_d18}
    \end{minipage}
    \caption{
   Bidirectional encoding under symmetric quantization consistently reduces 
average spike counts while halving unfolded time steps, dramatically 
lowering firing rate.
    }
    \label{fig:sup_d}
\end{figure*}

\newpage

\begin{table*}
\centering
\setcaptionwidth{0.8} 
\caption{Spike Firing Details and FLOPs of Linear Layers in LLaMA2-7B.}
\label{tab:s3}
\vspace{0.5em}
\begin{adjustbox}{width=0.8\linewidth} 
\begin{tabular}{llccccc}
\toprule
{Model} & {Layer} & {Time Complexity} & $T \times D$ & $R$ & {FLOPs(G)} & {Power(mJ)}  \\
\midrule
\multirow{7}{*}{\shortstack{LLaMA2-7B\\4-1.58}}
 & k\_proj      & $N{D_h}^2$ & 1 $\times$ 8 & 0.2230 & 1.92 &1.73 \\
 & v\_proj      & $N{D_h}^2$ & 1 $\times$ 8 & 0.2230 & 1.92 &1.73 \\
 & q\_proj      & $N{D_h}^2$ & 1 $\times$ 8 & 0.2230 & 1.92 &1.73 \\
 & out\_proj    & $N{D_h}^2$ & 1 $\times$ 8 & 0.2028 & 1.74 &1.57 \\
 & gate\_proj   & $N{D_h}{D_i}$ & 1 $\times$ 8 & 0.2189 & 5.05 &4.55\\
 & up\_proj     & $N{D_h}{D_i}$ & 1 $\times$ 8 & 0.2189 & 5.05 &4.55\\
 & down\_proj   & $N{D_h}{D_i}$ & 1 $\times$ 8 & 0.2096 & 4.84 &4.36 \\
 \midrule
\multirow{7}{*}{\shortstack{LLaMA2-7B\\4-1}}
 & k\_proj      & $N{D_h}^2$ & 1 $\times$ 16 & 0.2257 & 3.88 &3.49 
\\
 & v\_proj      & $N{D_h}^2$ & 1 $\times$ 16 & 0.2257 & 3.88 &3.49 
\\
 & q\_proj      & $N{D_h}^2$ & 1 $\times$ 16 & 0.2257 & 3.88 &3.49 
\\
 & out\_proj    & $N{D_h}^2$ & 1 $\times$ 16 & 0.2192 & 3.77 &3.39 
\\
 & gate\_proj   & $N{D_h}{D_i}$ & 1 $\times$ 16 & 0.2212 & 10.21 &9.19 \\
 & up\_proj     & $N{D_h}{D_i}$& 1 $\times$ 16 & 0.2212 & 10.21 &9.19 
\\
 & down\_proj   & $N{D_h}{D_i}$ & 1 $\times$ 16 & 0.2192 & 10.12 &9.11 
 \\
 \midrule
 \multirow{7}{*}{\shortstack{LLaMA2-7B\\6-1.58}}
 & k\_proj      & $N{D_h}^2$ & 1 $\times$ 32 & 0.2284 & 11.77 &10.59 
\\
 & v\_proj      & $N{D_h}^2$ & 1 $\times$ 32 & 0.2284 & 11.77 &10.59 
\\
 & q\_proj      & $N{D_h}^2$ & 1 $\times$ 32 & 0.2284 & 11.77 &10.59 
\\
 & out\_proj    & $N{D_h}^2$ & 1 $\times$ 32 & 0.2217 & 11.43 &10.29 
\\
 & gate\_proj   & $N{D_h}{D_i}$ & 1 $\times$ 32 & 0.2237 & 30.99 &27.89 
\\
 & up\_proj     & $N{D_h}{D_i}$ & 1 $\times$ 32 & 0.2237 & 30.99 &27.89 
\\
 & down\_proj   & $N{D_h}{D_i}$ & 1 $\times$ 32 & 0.2133 & 29.54 &26.59 
\\

\bottomrule
\end{tabular}
\end{adjustbox}
\end{table*}

\newpage

\begin{figure*}
    \centering
    \includegraphics[width=0.75\linewidth]{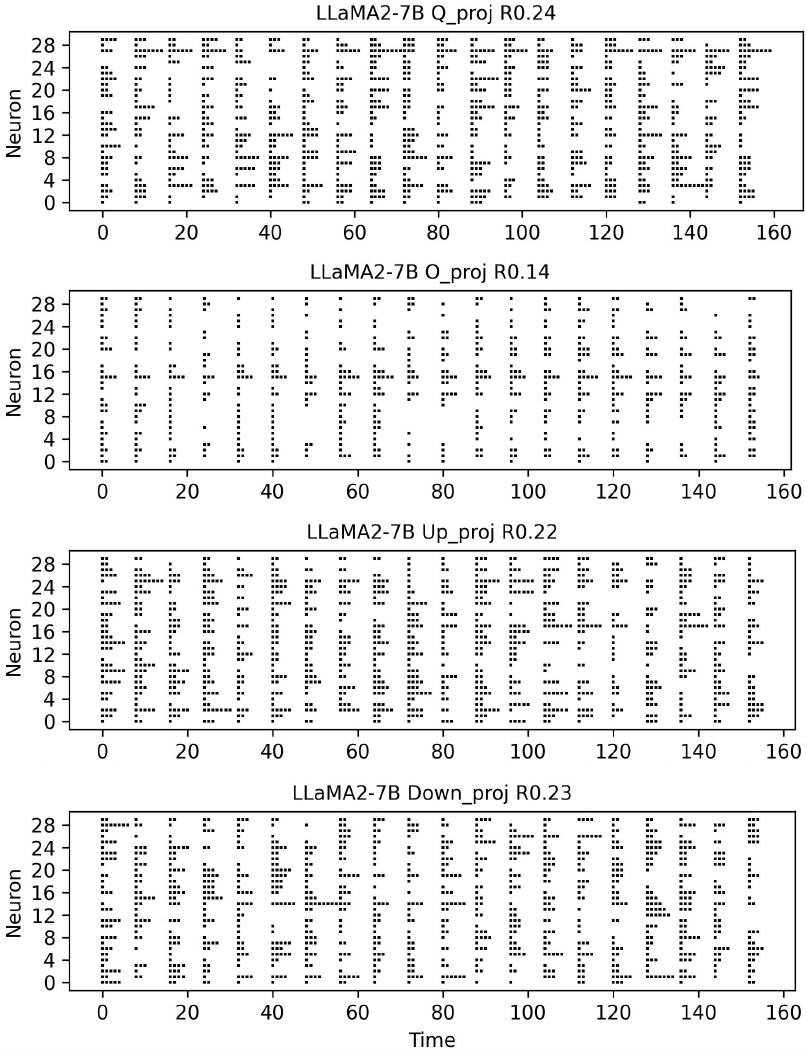}
    \caption{
    SDLLM spike visualization in LLaMA2-7B with single-layer neuron sampling.
    }
     \label{fig:sup_111}
\end{figure*}

\end{sloppypar}
\end{document}